\newcommand{\Bo}[1]{}
\newcommand{\comment}[1]{}
\newtheorem{theorem}{Theorem}
\newtheorem{lemma}[theorem]{Lemma}
\def\BCRPO{BRPO}
\def\acro{Acrobot-v1}
\def\cart{Cartpole-v1}
\def\luna{Lunarlander-v2}
\def\tb{\textbf}
\def\mdp{\mathcal{M}}
\def\pibehavior{\beta}  
\def\visitpi{d_\pi}
\def\visitbehavior{d_{\pibehavior}}
\def\Sset{S}
\def\Aset{A}
\renewcommand{\widehat}{\hat}
\def\eqref#1{equation~\ref{#1}}
\def\1{\bm{1}}
\DeclareMathAlphabet{\mathsfit}{\encodingdefault}{\sfdefault}{m}{sl}
\SetMathAlphabet{\mathsfit}{bold}{\encodingdefault}{\sfdefault}{bx}{n}
\definecolor{darkgreen}{rgb}{0,0.5,0}
\def\eqref#1{equation~\ref{#1}}
\def\1{\bm{1}}
\DeclareMathAlphabet{\mathsfit}{\encodingdefault}{\sfdefault}{m}{sl}
\SetMathAlphabet{\mathsfit}{bold}{\encodingdefault}{\sfdefault}{bx}{n}
\newcommand{\E}{\mathbb{E}}
\newcommand{\R}{\mathbb{R}}
\newcommand{\kl}[2]{D_{\mathrm{KL}}(#1 \Vert #2)}
\newcommand{\mix}[1][(s,a)]{\lambda#1}
\newcommand{\adv}[1][\beta]{A_{#1}}
\DeclareMathOperator*{\argmax}{arg\,max}
\title{\BCRPO{}: Batch Residual Policy Optimization}
\author{
Sungryull Sohn\footnote{Equal contribution}$^{,1,2}$\and
Yinlam Chow$^{*,1}$\and
Jayden Ooi$^{*,1}$\and\\
Ofir Nachum$^1$\and
Honglak Lee$^{1,2}$\and
Ed Chi$^{1}$\And
Craig Boutilier$^1$
\affiliations
$^1$Google Research\\
$^2$University of Michigan
\emails
srsohn@umich.edu, \{yinlamchow,\ jayden,\ ofirnachum,\ honglak,\ edchi,\ cboutilier\}@google.com
}
\begin{document}
\maketitle
\begin{abstract}
In batch reinforcement learning (RL), 
one often constrains a learned policy to be close to the behavior (data-generating)
policy,
e.g.,
by constraining
the learned action distribution to differ from the behavior policy by some maximum degree that
is the \emph{same at each state}.
This can cause batch RL to be overly conservative, unable
  to exploit large policy changes at frequently-visited, high-confidence states without risking poor performance at sparsely-visited states.
To remedy this, we propose 
\emph{residual policies}, where the allowable deviation of the learned policy
is \emph{state-action-dependent}.
We derive a new for RL method,  {\em \BCRPO{}}, which learns both the policy and allowable deviation 
that jointly maximize a lower bound on policy performance.
We show that \BCRPO{} achieves the state-of-the-art performance in a number of tasks.
\end{abstract}

\section{Introduction}\label{sec:intro}
Deep reinforcement learning (RL) methods are
increasingly successful in domains such as games \citep{mnih2013playing}, recommender systems \citep{gauci2018horizon}, and robotic manipulation \citep{nachum2019multi}. 
Much of this success relies on the ability to collect new data through \emph{online} interactions with the environment during training, often relying on simulation. Unfortunately, this approach is impractical in many real-world applications where faithful simulators are rare, and in
which active data collection through interactions with the environment is costly, time consuming, and risky.

\emph{Batch (or offline) RL}~\citep{lange2012batch} is an emerging research direction that aims to circumvent the need for
online data collection, instead learning a new policy using only offline trajectories generated by some behavior policy
(e.g., the currently deployed policy in some application domain).
In principle, any off-policy RL algorithm (e.g., DDPG \citep{lillicrap2015continuous}, DDQN \citep{van2016deep}) may be used in this batch (or more accurately, ``offline'')
fashion; but in practice, such methods have been shown to fail to learn when presented with arbitrary, static, off-policy data.
This can arise for several reasons: lack of exploration \citep{lange2012batch}, generalization error on out-of-distribution samples in value estimation \citep{kumar2019stabilizing}, or high-variance policy gradients induced by covariate shift \citep{mahmood2014weighted}. 

Various techniques have been proposed to address these issues, many of which can be interpreted as constraining or regularizing the learned policy to be \emph{close} to the behavior policy
\citep{fujimoto2018off,kumar2019stabilizing} (see further discussion below). 
While these batch RL methods show promise,
none provide improvement guarantees relative to the behavior policy. In domains for which batch RL is well-suited (e.g., due to the risks of active data collection), such guarantees can be critical to deployment of the resulting RL policies.

In this work, we use the well-established methodology of {\em conservative policy improvement (CPI)} \citep{kakade2002approximately} to develop a theoretically principled use of behavior-regularized RL in the batch setting.
Specifically, we parameterize the learned policy as a {\em residual policy}, in which a base (behavior) policy is combined linearly with a learned {\em candidate policy}  using a mixing factor called the {\em confidence}.
Such residual policies are motivated by several practical considerations. First, one often has access
to offline data or logs generated by a deployed base policy which is known to perform reasonably well. The offline
data can be used by an RL method to learn a candidate policy with better \emph{predicted} performance, but if confidence in 
parts of that prediction is weak, relying on the base policy may be desirable. The base policy may also
incorporate soft business constraints or some form of interpretability.
Our residual policies blend the two in a learned,
non-uniform fashion.
When deploying a new policy, we use the CPI framework to derive updates 
that learn both the candidate policy and the confidence that jointly maximize a lower bound on performance improvement relative to the behavior policy.
Crucially, while traditional applications of CPI, such as TRPO~\citep{schulman2015trust}, use a constant or state-independent confidence, our performance bounds and learning rules are based on {\em state-action-dependent} confidences---this gives rise to bounds that are less conservative than their CPI counterparts.

In Sec.~2, we formalize residual policies and in Sec.~3 analyze a novel
\emph{difference-value function}.
Sec.~4 holds our main result, a tighter lower bound on policy improvement for our residual approach (vs.\ CPI and TRPO). We derive the \BCRPO{} algorithm in Sec.~5 to jointly learn the candidate policy and confidence; experiments in Sec.~6 show its effectiveness. 

\section{Preliminaries}\label{sec:prelim}
We consider a \emph{Markov decision process (MDP)}
$\mdp = \langle \Sset, \Aset, R, T,  P_0 \rangle$, with state space $\Sset$, action space $\Aset$, reward function
$R$, transition kernel $T$, and initial state distribution $P_0$.
A policy $\pi$ interacts with the environment, starting at $s_0 \sim  P_0$.  At step $t$, the policy samples an action $a_t$ from a distribution $\pi(\cdot|s_t)$ over $\Aset$ and applies. 
The environment emits a reward $r_t= R(s_t, a_t)\in[0,R_{\max}]$ and next state $s_{t+1} \sim T(\cdot|s_t, a_t)$.
In this work, we consider discounted infinite-horizon problems with discount factor $\gamma\in[0,1)$. 

Let
\begin{small}$\Delta=\{\pi:\Sset\times \Aset\rightarrow [0,1],\;\sum_{a}\pi(a|s)=1\}$\end{small}
be the set of Markovian stationary policies. 
The expected (discounted) cumulative return of policy \begin{small}$\pi\in \Delta$\end{small}, is
$J_\pi:=\mathbb E_{T,\pi}[\sum_{t=0}^{\infty}\gamma^t R(s_t,a_t)\mid s_0\sim P_0]$.
Our aim is to find an optimal policy
$\pi^*\in\argmax_{\pi\in \Delta} \,J_\pi$.
In reinforcement learning (RL), we must do so without knowledge of $R, T$, using only trajectory data generated
from the environment (see below) or access to a simulator (see above).

We consider pure offline or \emph{batch RL}, where
the learner has access to a fixed data set (or \emph{batch}) of state-actions-reward-next-state samples
$B = \{(s, a, r, s')\}$, generated by a (known) \emph{behavior policy} $\pibehavior(\cdot|s)$.
No additional
data collection
is permitted. We denote by $d_{\pibehavior}$ the $\gamma$-discounted occupation measure of the MDP w.r.t.\ $\pibehavior$.

In this work, we study the problem of \emph{residual policy optimization (RPO)} in the batch setting.
Given the behavior policy $\pibehavior(a|s)$, we would like to learn a \emph{candidate policy} $\rho(a|s)$ and a state-action \emph{confidence} $\mix$, such that the final \emph{residual policy}
$\pi(a|s)=(1 - \mix)\cdot\pibehavior(a|s) + \mix\cdot\rho(a|s)$ maximizes total return. As discussed above,
this type of mixture allows one to exploit an existing, ``well-performing'' behavior policy. Intuitively, $\mix$ should capture how much we can trust $\rho$ at each $s, a$ pair, given the available data. To ensure that the residual policy is a probability distribution at every state $s\in \Sset$, we constrain the confidence $\lambda$ to lie in the set
$
 \Lambda(s)=\left\{\lambda: \Sset \times \Aset \rightarrow[0,1]:\sum_{a}\mix\left(\pibehavior(a|s)-\rho(a|s)\right)=0\right\}.
$

\paragraph{Related Work.}
Similar to the above policy formulation, CPI \citep{kakade2002approximately} also develops a policy mixing methodology that guarantees performance improvement when the confidence $\lambda$ is a constant. However, CPI is an online algorithm, and it learns the candidate policy independently of (not jointly with) the mixing factor; thus, extension of CPI to offline, batch setting is unclear.
Other existing work also deals with online residual policy learning without jointly learning mixing factors
\citep{johannink2019residual, silver2018residual}.
Common applications of CPI
may treat $\lambda$ as a hyper-parameter, which specifies the maximum total-variation distance
between the learned and behavior policy distributions (see standard proxies in
\citet{schulman2015trust,pirotta2013safe} for details).
 
\emph{Batch-constrained Q-learning (BCQ)} \citep{fujimoto2018off, fujimoto2019benchmarking} incorporates the behavior policy 
when defining
the admissible action set in Q-learning for selecting the highest-valued actions that are similar to data samples in the batch. 
\emph{BEAR}~\citep{kumar2019stabilizing} is motivated as a means to control the accumulation of out-of-distribution value errors; but its main algorithmic contribution 
is realized by adding a regularizer to the loss that measures the kernel maximum mean discrepancy (MMD) \citep{gretton2007kernel} between the learned and behavior policies similar to KL-control~\citep{jaques2019way}.
Algorithms such as SPI \citep{ghavamzadeh2016safe} and SPIBB \citep{laroche2017safe} bootstraps the learned policy with the behavior policy when the uncertainty in the update for current state-action pair is high, where the uncertainty is measured by the visitation frequency of state-action pairs in the batch data. While these methods work well in some applications it is unclear if they have any performance guarantees.

\section{The Difference-value Function}\label{sec:rpo}


We begin by defining and characterizing the \emph{difference-value function},
a concept we exploit in the derivation of our batch RPO method in Secs.~4 and~5.
For any $s\in S$, let $V_\pi(s)$ and $V_\pibehavior(s)$ be the value functions induced by policies $\pi$ and $\pibehavior$, respectively. 
Using the structure of the residual policy,  we establish two characterizations of the \emph{difference-value function} $\Delta V_{\pi,\pibehavior}(s):=V_\pi(s)-V_\pibehavior(s)$.
\begin{lemma}\label{lem:main}
Let $A_\pi(s,a):=Q_\pi(s,a)-V_\pi(s)$ be the advantage function w.r.t. residual policy $\pi$, where $Q_\pi$ is the state-action value.
The difference-value is
$
\Delta V_{\pi,\pibehavior}(s)\!=\!\mathbb E_{T,\pibehavior} [ \sum_{t=0}^\infty\gamma^t\Delta\widehat{A}_{\pi,\pibehavior,\rho,\lambda}(s_t)\!\mid\! s_0=s ],
$
where

\vspace{-0.1in}
\begin{small}
\[
\smash{\Delta\widehat{A}_{\pibehavior,\rho,\lambda}(s) = \sum_{a\in \Aset} \beta(a|s)\cdot\mix\cdot\frac{\rho(a|s) - \pibehavior(a|s)}{\pibehavior(a|s)}\cdot \adv[\pi](s, a)}
\]
\end{small}
\vspace{-0.1in}

is the residual reward that depends on $\lambda$ and difference of candidate policy $\rho$ and behavior policy $\pibehavior$. 
\end{lemma}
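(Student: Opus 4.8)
The plan is to recognize this as the ``reverse'' form of the Kakade--Langford performance-difference lemma: instead of expressing $V_\pi - V_\beta$ via $\pi$-trajectories and the $\beta$-advantage, I express it via $\beta$-trajectories and the $\pi$-advantage, and then perform a short algebraic simplification that uses the specific linear form $\pi = (1-\lambda)\beta + \lambda\rho$.

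First I would derive a Bellman-type identity for $V_\pi$ evaluated \emph{along $\beta$-trajectories}. For any $s$ and any $a$ we have $V_\pi(s) = Q_\pi(s,a) - A_\pi(s,a)$; taking the expectation over $a\sim\beta(\cdot\mid s)$ and expanding $Q_\pi(s,a) = R(s,a) + \gamma\,\mathbb{E}_{s'\sim T(\cdot\mid s,a)}[V_\pi(s')]$ gives
\[
V_\pi(s) = \mathbb{E}_{a\sim\beta,\,s'\sim T}\big[\,R(s,a) - A_\pi(s,a) + \gamma V_\pi(s')\,\big].
\]
Unrolling this recursion along the Markov chain induced by $\beta$ (valid since $\gamma<1$ and, because $R\in[0,R_{\max}]$, the value and advantage functions are uniformly bounded, so dominated convergence permits swapping the infinite discounted sum with the expectation) yields $V_\pi(s) = \mathbb{E}_{T,\beta}\big[\sum_{t\ge0}\gamma^t(R(s_t,a_t)-A_\pi(s_t,a_t))\mid s_0=s\big]$. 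Since $V_\beta(s) = \mathbb{E}_{T,\beta}\big[\sum_{t\ge0}\gamma^t R(s_t,a_t)\mid s_0=s\big]$, subtracting gives
\[
\Delta V_{\pi,\beta}(s) = -\,\mathbb{E}_{T,\beta}\Big[\textstyle\sum_{t\ge0}\gamma^t A_\pi(s_t,a_t)\ \Big|\ s_0=s\Big].
\]

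Next I would push the action average inside using the tower property: conditioning on $s_t$ and integrating $a_t\sim\beta(\cdot\mid s_t)$, the right-hand side equals $\mathbb{E}_{T,\beta}\big[\sum_{t\ge0}\gamma^t g(s_t)\mid s_0=s\big]$ with $g(s):=-\sum_a\beta(a\mid s)A_\pi(s,a)$, which already matches the structure of the claimed expression. It then remains to identify $g(s)$ with $\Delta\widehat{A}_{\beta,\rho,\lambda}(s)$. Using $\sum_a\pi(a\mid s)A_\pi(s,a) = \sum_a\pi(a\mid s)Q_\pi(s,a) - V_\pi(s) = 0$, I can add this zero term to rewrite $g(s) = \sum_a(\pi(a\mid s)-\beta(a\mid s))\,A_\pi(s,a)$; and since the residual structure gives $\pi(a\mid s)-\beta(a\mid s) = \lambda(s,a)\,(\rho(a\mid s)-\beta(a\mid s)) = \beta(a\mid s)\,\lambda(s,a)\,\tfrac{\rho(a\mid s)-\beta(a\mid s)}{\beta(a\mid s)}$, this is exactly $\Delta\widehat{A}_{\beta,\rho,\lambda}(s)$, completing the proof.

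The only real obstacle is bookkeeping rather than conceptual: getting the sign and direction of the performance-difference identity right (it is the ``reverse'' version here, $\beta$-trajectories paired with the $\pi$-advantage), and rigorously justifying the interchange of the infinite discounted sum with the expectation, for which boundedness of the reward together with $\gamma<1$ suffices. The final algebraic step is routine once one notices the two facts $\sum_a\pi(a\mid s)A_\pi(s,a)=0$ and $\pi-\beta = \lambda(\rho-\beta)$.
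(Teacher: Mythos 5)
Your proof is correct, but it takes a genuinely different route from the paper's. The paper works at the level of transition operators: it first establishes a resolvent identity $(I-\gamma T_{\pi})^{-1}-(I-\gamma T_{\beta})^{-1}=-\gamma(I-\gamma T_{\beta})^{-1}\Delta T_{\beta,\rho,\lambda}(I-\gamma T_{\pi})^{-1}$, splits $V_\pi-V_\beta$ into a resolvent-difference term applied to $R_\pi$ plus $(I-\gamma T_\beta)^{-1}(R_\pi-R_\beta)$, recombines these into $\mathbb{E}_{T,\beta}[\sum_t\gamma^t\lambda(s_t,a_t)\frac{\rho-\beta}{\beta}Q_\pi(s_t,a_t)]$, and only at the very end invokes the constraint $\sum_a\lambda(s,a)(\rho(a|s)-\beta(a|s))=0$ to pass from $Q_\pi$ to $A_\pi$. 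You instead run the Kakade--Langford performance-difference argument in its ``reverse'' form: unroll $V_\pi(s)=\mathbb{E}_{a\sim\beta,s'\sim T}[R(s,a)-A_\pi(s,a)+\gamma V_\pi(s')]$ along $\beta$-trajectories, subtract $V_\beta$, and then use the two elementary facts $\sum_a\pi(a|s)A_\pi(s,a)=0$ and $\pi-\beta=\lambda(\rho-\beta)$ to identify the per-state term with $\Delta\widehat{A}_{\beta,\rho,\lambda}$. Both arguments rely on the same underlying structural facts (the constraint defining $\Lambda$ is precisely what makes $\pi$ normalized, which is what your identity $\sum_a\pi A_\pi=0$ uses), but your derivation is more elementary and self-contained — no matrix inversion or separate technical lemma needed — while the paper's operator-level identity is reused almost verbatim in the proofs of Theorem~\ref{thm:main} and Theorem~\ref{thm:cpi_2}, so it earns its keep there. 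Your handling of the interchange of sum and expectation (bounded rewards, $\gamma<1$, vanishing tail $\gamma^{T+1}\mathbb{E}[V_\pi(s_{T+1})]$) is adequate and is in fact more careful than what the paper writes down.
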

This result establishes that the difference value is essentially a value function w.r.t.\ the residual reward. Moreover, it is proportional to the advantage of the target policy, the confidence, and the difference of policies.
While the difference value can be estimated from behavior data batch $B$,  this formulation requires knowledge of the advantage function $A_\pi$ w.r.t.\ the target policy, which must be re-learned at every $\pi$-update in an off-policy fashion. Fortunately, we can show that the difference value can 
also be expressed as a function of the advantage w.r.t.\
the \emph{behavior policy} $\pibehavior$:
\begin{theorem}\label{thm:main}
Let $\adv(s,a):=Q_\pibehavior(s,a)-V_\pibehavior(s)$ be the advantage function induced by $\pibehavior$, in which $Q_\pibehavior$ is the state-action value. The difference-value is given by $\Delta V_{\pi,\pibehavior}(s)\!=\!\mathbb E_{T,\pi}\left[\sum_{t=0}^\infty\!\gamma^t\Delta A_{\pibehavior,\rho,\lambda}(s_t)\!\!\mid\!\! s_0=s\right]$,
where

\vspace{-0.1in}
\begin{small}
\[
\smash{
\Delta A_{\pibehavior,\rho,\lambda}(s)=\sum_{a\in A} \pibehavior(a|s)\cdot\lambda(a|s)\cdot\frac{\rho(a|s)-\pibehavior(a|s)}{\pibehavior(a|s)}\cdot \adv(s,a)
}
\]
\end{small}
\vspace{-0.1in}

is the residual reward that depends on $\lambda$ and difference of candidate policy $\rho$ and behavior policy $\pibehavior$. 
\end{theorem}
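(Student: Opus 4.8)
The plan is to obtain the identity from the classical performance-difference lemma of \citet{kakade2002approximately}, specialized to the residual structure $\pi(a\mid s) = \beta(a\mid s) + \lambda(a\mid s)\bigl(\rho(a\mid s) - \beta(a\mid s)\bigr)$. The key observation is that $\Delta A_{\beta,\rho,\lambda}(s)$ is nothing but $\mathbb{E}_{a\sim\pi(\cdot\mid s)}[\adv(s,a)]$ once the behavior-policy part of the mixture is cancelled.

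First I would establish the value-function form of the performance-difference lemma: for any two policies $\pi,\beta$,
\[
V_\pi(s) - V_\beta(s) = \mathbb{E}_{T,\pi}\Bigl[\textstyle\sum_{t=0}^\infty \gamma^t \adv(s_t,a_t) \;\Big|\; s_0 = s\Bigr].
\]
The argument is a one-line telescoping: substitute $\adv(s_t,a_t) = R(s_t,a_t) + \gamma V_\beta(s_{t+1}) - V_\beta(s_t)$, multiply by $\gamma^t$ and sum; the terms $\gamma^t\bigl(\gamma V_\beta(s_{t+1}) - V_\beta(s_t)\bigr)$ telescope to $-V_\beta(s_0)$, with the tail $\gamma^t V_\beta(s_t)$ vanishing since $V_\beta$ is bounded by $R_{\max}/(1-\gamma)$ and $\gamma<1$; taking the expectation over $\pi$-trajectories started at $s$ gives the displayed identity.

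Next I would condition the inner term on the visited state: $\mathbb{E}_{a\sim\pi(\cdot\mid s_t)}[\adv(s_t,a)] = \sum_a \pi(a\mid s_t)\,\adv(s_t,a)$. Inserting $\pi = \beta + \lambda(\rho-\beta)$ splits this into $\sum_a \beta(a\mid s_t)\,\adv(s_t,a)$, which equals $V_\beta(s_t) - V_\beta(s_t) = 0$ by the definition of the advantage, plus $\sum_a \lambda(a\mid s_t)\bigl(\rho(a\mid s_t)-\beta(a\mid s_t)\bigr)\adv(s_t,a)$; multiplying and dividing the surviving sum by $\beta(a\mid s_t)$ turns it into exactly $\Delta A_{\beta,\rho,\lambda}(s_t)$. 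Substituting back into the performance-difference identity gives $\Delta V_{\pi,\beta}(s) = \mathbb{E}_{T,\pi}[\sum_t \gamma^t \Delta A_{\beta,\rho,\lambda}(s_t)\mid s_0=s]$, as claimed.

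I expect the only real points requiring care to be bookkeeping: one should note that the mixture $\pi$ is a legitimate policy precisely because of the constraint $\lambda\in\Lambda(s)$ (so the split above is valid and the $\adv$ terms well defined), and that the interchange of the infinite sum with the expectation is justified by absolute convergence from bounded rewards and $\gamma<1$. Conceptually everything reduces to the substitution $\pi=\beta+\lambda(\rho-\beta)$ together with $\mathbb{E}_{a\sim\beta}[\adv(s,a)]=0$. It is worth noting that this is the \emph{dual} of Lemma~\ref{lem:main}: there one telescopes against $V_\pi$ and lands on an expectation under $\beta$ with target advantages $A_\pi$, whereas here one telescopes against $V_\beta$ and lands on an expectation under $\pi$ with behavior advantages $\adv$.
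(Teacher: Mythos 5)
Your proof is correct, but it takes a genuinely different route from the paper's. You prove the identity from scratch: first the classical performance-difference lemma $V_\pi(s)-V_\beta(s)=\E_{T,\pi}[\sum_t\gamma^t A_\beta(s_t,a_t)\mid s_0=s]$ via the telescoping substitution $A_\beta(s,a)=R(s,a)+\gamma\E[V_\beta(s')]-V_\beta(s)$, and then the pointwise observation that $\pi(a|s)-\beta(a|s)=\lambda(s,a)(\rho(a|s)-\beta(a|s))$ together with $\E_{a\sim\beta}[A_\beta(s,a)]=0$ converts $\E_{a\sim\pi}[A_\beta(s_t,a)]$ into exactly $\Delta A_{\beta,\rho,\lambda}(s_t)$. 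The paper instead derives Theorem~\ref{thm:main} \emph{from} Lemma~\ref{lem:main}: it rewrites the intermediate identity of that proof in matrix form, $V_\pi-V_\beta=(I-\gamma T_\beta)^{-1}(\Delta R+\gamma\Delta T V_\beta+\gamma\Delta T(V_\pi-V_\beta))$, solves for $V_\pi-V_\beta$, and applies the matrix inversion lemma to show the resolvent $(I-(I-\gamma T_\beta)^{-1}\gamma\Delta T)^{-1}(I-\gamma T_\beta)^{-1}$ equals $(I-\gamma(T_\beta+\Delta T))^{-1}=(I-\gamma T_\pi)^{-1}$, i.e.\ an expectation over $\pi$-trajectories. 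Your argument is shorter, more elementary, and self-contained (it does not depend on Lemma~\ref{lem:main} at all); the paper's derivation buys an explicit algebraic link between the two difference-value representations via the perturbation $\Delta T_{\beta,\rho,\lambda}$ of the transition kernel, machinery it then reuses in the proof of Theorem~\ref{thm:cpi_2}. The only points worth making explicit in your write-up are the ones you already flag: $\beta(a|s)>0$ wherever the importance ratio is written (needed only for the cosmetic $\beta\cdot\frac{\rho-\beta}{\beta}$ form), and the legitimacy of $\pi$ as a distribution, which is exactly the constraint $\lambda\in\Lambda(s)$.
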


In our RPO approach, we exploit the
nature of the difference-value function to solve the maximization w.r.t.\ the confidence and candidate policy: $(\mix[]^*(s, \cdot),\rho^*(\cdot|s))\!\in \argmax_{\lambda\in \Lambda(s),\rho\in\Delta}\Delta V_{\pi}(s)$, $\forall s\in \Sset$. Since $\mix[](s, \cdot)=0$ implies $\Delta V_{\pi,\pibehavior}(s)=0$, the optimal difference-value function $\Delta V^*(s):=\max_{\lambda\in \Lambda(s),\rho\in\Delta}\Delta V_{\pi}(s)$ is always lower-bounded by $0$.
We motivate computing $(\lambda,\rho)$ with the above difference-value formulation rather than as a standard RL problem as follows. In the tabular case, optimizing $(\lambda,\rho)$ with either formulation gives an identical result. However, both the difference-value function in Theorem~\ref{thm:main} and the standard RL objective require sampling data generated by the updated policy $\pi$.
In the batch setting, when fresh samples are unavailable, learning $(\lambda,\rho)$ with off-policy data 
may incur instability due to high generalization error \citep{kumar2019stabilizing}. While this can be alleviated by adopting the CPI
methodology, applying CPI directly to RL can be overly conservative \citep{schulman2015trust}.
By contrast, we leverage the special structure of the difference-value function (e.g., non-negativity) below, using
this new formulation together with CPI to derive a less conservative RPO algorithm.

\section{Batch Residual Policy Optimization}\label{sec:bcrpo}
We now develop an RPO algorithm that has stable learning performance in the batch setting and \emph{performance improvement guarantees}. For the sake of brevity, in the following we only present the main results on performance guarantees of RPO. Proofs of these results can be found in the appendix of the extended paper. 
We begin with
the following baseline result, directly applying Corollary~1 of the TRPO result to RPO to
ensure the residual policy $\pi$ performs no worse than $\pibehavior$.
\begin{lemma}\label{lem:cpi_1}
For any value function $U:S\to\R$, the difference-return satisfies
$J_\pi - J_\beta \ge
    \frac{1}{1-\gamma} \widetilde L_{U,\pibehavior,\rho,\lambda} - \frac{2 \gamma}{(1-\gamma)^2} \cdot \epsilon_{U,\pibehavior,\rho,\lambda} \cdot \E_{s\sim \visitbehavior}\left[
    \sqrt{\frac{1}{2}
    \kl{\pibehavior(s)}{\rho(s)}
    }
    \right],
$
where the surrogate objective and the penalty weight are
\begin{small}
\begin{equation*}
\begin{split}
&\widetilde L_{U,\pibehavior,\rho,\lambda} \!:=\! \mathop{\E}_{(s,a,s')\sim \visitbehavior}\left[
    \mix \cdot\frac{\rho(a|s)-\pibehavior(a|s)}{\pibehavior(a|s)}\!\cdot\!
    \Delta U(s,a,s')
    \right],\\
    &\epsilon_{U,\pibehavior,\rho,\lambda} \!:=\! \max_{s}\! |\E_{\pi, T}[\Delta U(s,a,s')]|,
\end{split}
\end{equation*}
\end{small}
where $\Delta U(s,a,s') := R(s,a) + \gamma U(s') - U(s)$.
\end{lemma}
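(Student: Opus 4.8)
The plan is to run the conservative-policy-improvement (performance-difference) argument behind TRPO, generalized in two ways that the residual structure makes available: we allow an \emph{arbitrary} baseline value function $U$ in place of $V_\pibehavior$, and we route every divergence estimate through the candidate policy $\rho$ rather than through the residual $\pi$ itself, using $\pi(a|s)-\pibehavior(a|s)=\mix\bigl(\rho(a|s)-\pibehavior(a|s)\bigr)$ together with $\mix\in[0,1]$. \textbf{Step 1 (baseline-shifted return identity).} First I would telescope the definition of $J$ to show that, for any policy $\pi'$ and any $U:S\to\R$,
\[
J_{\pi'} \;=\; \frac{1}{1-\gamma}\,\E_{s\sim d_{\pi'},\,a\sim\pi'(\cdot|s),\,s'\sim T}\bigl[\Delta U(s,a,s')\bigr] \;+\; \E_{s_0\sim P_0}\bigl[U(s_0)\bigr],
\]
because $\sum_t\gamma^t\bigl(\gamma U(s_{t+1})-U(s_t)\bigr)$ collapses to $-U(s_0)$. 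Applying this to $\pi'=\pi$ and $\pi'=\pibehavior$ and subtracting, the $U(s_0)$ terms cancel and
\[
J_\pi - J_\pibehavior \;=\; \frac{1}{1-\gamma}\Bigl(\E_{s\sim \visitpi,\,a\sim\pi,\,s'}[\Delta U] \;-\; \E_{s\sim\visitbehavior,\,a\sim\pibehavior,\,s'}[\Delta U]\Bigr).
\]

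\textbf{Step 2 (extract the surrogate, isolate the error).} Adding and subtracting $\tfrac{1}{1-\gamma}\E_{s\sim\visitbehavior,\,a\sim\pi,\,s'}[\Delta U]$ splits the right-hand side into (a)~$\tfrac{1}{1-\gamma}\E_{s\sim\visitbehavior}\bigl[(\E_{a\sim\pi}-\E_{a\sim\pibehavior})[\Delta U]\bigr]$ and (b)~$\tfrac{1}{1-\gamma}\sum_s(\visitpi(s)-\visitbehavior(s))\,\overline{\Delta U}_\pi(s)$, where $\overline{\Delta U}_\pi(s):=\E_{a\sim\pi,\,s'\sim T}[\Delta U(s,a,s')]$. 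Term (a) is exactly $\tfrac{1}{1-\gamma}\widetilde L_{U,\pibehavior,\rho,\lambda}$: importance-sample against $\pibehavior$ and substitute $\pi-\pibehavior=\mix(\rho-\pibehavior)$. For term (b), H\"older's inequality gives $|(b)|\le\tfrac{1}{1-\gamma}\|\visitpi-\visitbehavior\|_1\cdot\|\overline{\Delta U}_\pi\|_\infty=\tfrac{1}{1-\gamma}\|\visitpi-\visitbehavior\|_1\cdot\epsilon_{U,\pibehavior,\rho,\lambda}$.

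\textbf{Step 3 (control the occupancy shift).} Finally I would invoke the standard discounted-occupancy perturbation bound $\|\visitpi-\visitbehavior\|_1\le\tfrac{2\gamma}{1-\gamma}\,\E_{s\sim\visitbehavior}\bigl[\dtv(\pi(\cdot|s)\Vert\pibehavior(\cdot|s))\bigr]$, proved via the resolvent identity applied to the transition operators $P_\pi,P_\pibehavior$ induced by the two policies, together with the $\|\cdot\|_1$-nonexpansiveness of discounted transition kernels. Then, since $0\le\mix\le1$ forces $|\pi(a|s)-\pibehavior(a|s)|\le|\rho(a|s)-\pibehavior(a|s)|$ pointwise, we get $\dtv(\pi(\cdot|s)\Vert\pibehavior(\cdot|s))\le\dtv(\rho(\cdot|s)\Vert\pibehavior(\cdot|s))$, and Pinsker's inequality upgrades this to $\sqrt{\tfrac12\,\kl{\pibehavior(\cdot|s)}{\rho(\cdot|s)}}$. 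Substituting these bounds on term (b) into the decomposition of Step 2 yields $J_\pi - J_\pibehavior \ge \tfrac{1}{1-\gamma}\widetilde L_{U,\pibehavior,\rho,\lambda} - |(b)|$, which is the claimed inequality.

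The mechanical parts --- the telescoping identity, the importance-sampling rewrite of the surrogate, and Pinsker --- are routine; the step that needs care is the occupancy-shift bound. One must ensure that the perturbation estimate comes out with the \emph{behavior} occupancy $\visitbehavior$ on the right-hand side (so that every quantity is estimable from the batch $B$), and that the replacement of $\dtv(\pi(\cdot|s)\Vert\pibehavior(\cdot|s))$ by $\dtv(\rho(\cdot|s)\Vert\pibehavior(\cdot|s))$ --- precisely the point where the residual parameterization buys a weaker, $\lambda$-free penalty than a naive application of TRPO would give --- is justified by the pointwise contraction $|\pi-\pibehavior|\le|\rho-\pibehavior|$.
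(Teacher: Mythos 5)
Your proof is correct and follows essentially the route the paper intends: the paper gives no explicit proof of this lemma (it simply invokes ``Corollary~1 of the TRPO result''), and your three steps --- the telescoped baseline identity for arbitrary $U$, the add-and-subtract decomposition isolating $\widetilde L_{U,\pibehavior,\rho,\lambda}$ from the occupancy-shift term, and the bound $\|\visitpi-\visitbehavior\|_1\le\frac{2\gamma}{1-\gamma}\E_{s\sim\visitbehavior}[\dtv(\pi(\cdot|s),\pibehavior(\cdot|s))]$ combined with the pointwise contraction $|\pi-\pibehavior|=\mix|\rho-\pibehavior|\le|\rho-\pibehavior|$ and Pinsker --- are exactly the standard CPI/TRPO machinery worked out in full, with all constants matching the statement. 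The two places that needed care (landing the penalty as an expectation over $\visitbehavior$ rather than a $\max_s$ of KL, and substituting $\rho$ for $\pi$ in the divergence so the bound is $\lambda$-free and batch-estimable) are both handled correctly.
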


When $U = V_\pi$, one has $\E_{a\sim\pi(s)}[\Delta U(s,a,s')]=0$, $\forall s\in \Sset$, which implies that the inequality is tight---this lemma then
coincides Lemma~\ref{lem:main}.  While this CPI result forms the basis of many RL algorithms (e.g., TRPO, PPO), in many cases it is very loose since $\epsilon_{U,\beta,\rho,\lambda}$ is a maximum over all states. Thus, using this bound for policy optimization may be \emph{overly conservative}, i.e., algorithms which rely on this bound must take very small policy improvement steps, especially when the penalty weight $\epsilon_{U,\pibehavior,\rho,\lambda}$ is large, i.e., $|\epsilon_{U,\pibehavior,\rho,\lambda} / (1-\gamma)|>>|\widetilde L_{U,\pibehavior,\rho,\lambda}|$.   While this approach may be reasonable in online settings---when collection of new data (with an updated behavior policy $\pibehavior\leftarrow\pi$) is allowed---in the batch setting
it is challenging to overcome such conservatism.

To address this issue, we develop a CPI method that is specifically tied to the difference-value formulation, and uses a state-action-dependent confidence $\mix$. We first derive the following theorem, which bounds the difference returns that are generated by $\pibehavior$ and $\pi$.
\begin{theorem}\label{thm:cpi_2}
The difference return of $(\pi,\pibehavior)$ satisfies
\begin{equation*}
    J_\pi \!-\! J_\beta \!\ge\! \frac{1}{1-\gamma}\left( L^{\prime}_{\pibehavior,\rho,\lambda} \!-\! \frac{\gamma}{1-\gamma}\!\cdot\! L^{\prime\prime}_{\pibehavior,\rho,\lambda}\!\cdot \!\max_{s_0\in S}\,L^{\prime\prime\prime}_{\pibehavior,\rho,\lambda}(s_0)
    \right)\!,
\end{equation*}
where the surrogate objective function, regularization, and penalty weight are given by
\begin{small}
\begin{equation*}
\begin{split}
    L^{\prime}_{\pibehavior,\rho,\lambda}:=& \E_{(s,a)\sim \visitbehavior}\left[
    \mix \cdot\frac{\rho(a|s)-\pibehavior(a|s)}{\pibehavior(a|s)} \cdot
    A_\pibehavior(s,a)
    \right]\\
    L^{\prime\prime}_{\pibehavior,\rho,\lambda}:=& \E_{(s,a)\sim \visitbehavior}\left[
    \mix \cdot\frac{|\rho(a|s)-\pibehavior(a|s)|}{\pibehavior(a|s)}\right]\\
    L^{\prime\prime\prime}_{\pibehavior,\rho,\lambda}(s_0):=& \E_{(s,a)\sim \visitbehavior(s_0)}\! \left[
    \mix\!\cdot\!\frac{|\rho(a|s)\!-\!\pibehavior(a|s)|}{\pibehavior(a|s)}\!\cdot\! |A_\pibehavior(s,a)| \right]
\end{split}
\end{equation*}
\end{small}
respectively, in which $\visitbehavior(s_0)$ is the discounted occupancy measure w.r.t. $\pibehavior$ given initial state $s_0$.
\end{theorem}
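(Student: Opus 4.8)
The plan is to build directly on Theorem~\ref{thm:main}. Since $J_\pi - J_\pibehavior = \E_{s_0\sim P_0}[\Delta V_{\pi,\pibehavior}(s_0)]$ and Theorem~\ref{thm:main} identifies $W:=\Delta V_{\pi,\pibehavior}$ with the value, under policy $\pi$, of the state-dependent reward $\Delta A_{\pibehavior,\rho,\lambda}$, the function $W$ obeys the Bellman recursion $W(s) = \Delta A_{\pibehavior,\rho,\lambda}(s) + \gamma\,\E_{a\sim\pi(\cdot|s),\,s'\sim T(\cdot|s,a)}[W(s')]$. Two elementary facts will be used repeatedly: $\pi(a|s)-\pibehavior(a|s) = \lambda(a|s)(\rho(a|s)-\pibehavior(a|s))$, so $\Delta A_{\pibehavior,\rho,\lambda}(s) = \sum_a(\pi(a|s)-\pibehavior(a|s))\,\adv(s,a)$; and, by the triangle inequality, $|\Delta A_{\pibehavior,\rho,\lambda}(s)| \le h(s) := \sum_a|\pi(a|s)-\pibehavior(a|s)|\,|\adv(s,a)|$, where $h$ is precisely the integrand appearing inside $L^{\prime\prime\prime}_{\pibehavior,\rho,\lambda}$.

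Next, split the one-step lookahead in the recursion along $\pi = \pibehavior + (\pi-\pibehavior)$: with $P_\pibehavior(\cdot|s):=\sum_a\pibehavior(a|s)T(\cdot|s,a)$ and $\delta(s):=\sum_a(\pi(a|s)-\pibehavior(a|s))\,\E_{s'\sim T(\cdot|s,a)}[W(s')]$, the recursion reads $(I-\gamma P_\pibehavior)W = \Delta A_{\pibehavior,\rho,\lambda} + \gamma\delta$, so $W = (I-\gamma P_\pibehavior)^{-1}(\Delta A_{\pibehavior,\rho,\lambda}+\gamma\delta)$. Taking $\E_{s_0\sim P_0}$ turns $(I-\gamma P_\pibehavior)^{-1}$ into the discounted occupancy $\visitbehavior$, and a change of measure (folding $\pibehavior(a|s)$ into the state sum) gives
\[
J_\pi - J_\pibehavior \;=\; \frac{1}{1-\gamma}\Big(L^{\prime}_{\pibehavior,\rho,\lambda} \;+\; \gamma\,\E_{s\sim\visitbehavior}[\delta(s)]\Big),
\]
since $\E_{s\sim\visitbehavior}[\Delta A_{\pibehavior,\rho,\lambda}(s)] = L^{\prime}_{\pibehavior,\rho,\lambda}$. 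It remains to lower-bound $\E_{s\sim\visitbehavior}[\delta(s)]$. Bounding $|\E_{s'\sim T(\cdot|s,a)}[W(s')]|\le\max_{s'}|W(s')|$ and recognizing $\E_{s\sim\visitbehavior}[\sum_a|\pi(a|s)-\pibehavior(a|s)|] = L^{\prime\prime}_{\pibehavior,\rho,\lambda}$ yields $\E_{s\sim\visitbehavior}[\delta(s)] \ge -\,L^{\prime\prime}_{\pibehavior,\rho,\lambda}\cdot\max_{s'}|W(s')|$.

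The remaining ingredient — and the step I expect to be the main obstacle — is $\max_{s'}|W(s')| \le \tfrac{1}{1-\gamma}\max_{s_0}L^{\prime\prime\prime}_{\pibehavior,\rho,\lambda}(s_0)$. I would obtain it in two moves. First, since $W$ is a $\pi$-value of $\Delta A_{\pibehavior,\rho,\lambda}$ and the discounted occupancy started at any state has total mass $\tfrac{1}{1-\gamma}$, the bound $|\Delta A_{\pibehavior,\rho,\lambda}|\le h$ gives $|W(s')|\le\tfrac{1}{1-\gamma}\sup_s h(s)$. Second, for any $s_0$ the $t=0$ term of the discounted occupancy $\visitbehavior(s_0)$ already places behavior-action mass at $s_0$ (at least that of a single visit), so $L^{\prime\prime\prime}_{\pibehavior,\rho,\lambda}(s_0)\ge h(s_0)$; hence $\sup_s h(s)\le\max_{s_0}L^{\prime\prime\prime}_{\pibehavior,\rho,\lambda}(s_0)$. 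Combining these with the displayed identity gives exactly the stated bound.

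The genuinely delicate part is this domination: a naive estimate controls the future residual value $|W(s')|$ only by a crude sup-norm $\sup_s h(s)$, and the benefit of the state-action-dependent confidence is recovered precisely by noting that $\max_{s_0}L^{\prime\prime\prime}_{\pibehavior,\rho,\lambda}(s_0)$ — a worst-case \emph{discounted average} of $h$, rather than the $\max_s$ penalty of Lemma~\ref{lem:cpi_1} — already dominates that sup-norm through its $t=0$ term. Everything else is bookkeeping: tracking the $(1-\gamma)$ normalization of $\visitbehavior$ and $\visitbehavior(s_0)$ consistently, checking that $I-\gamma P_\pibehavior$ is invertible for $\gamma<1$ so the fixed-point manipulation is legitimate, and the two changes of measure that turn $\visitbehavior$-expectations into the $L^{\prime},L^{\prime\prime}$ forms.
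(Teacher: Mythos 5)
Your setup is sound up to and including the identity
$J_\pi - J_\pibehavior = \tfrac{1}{1-\gamma}\bigl(L^{\prime}_{\pibehavior,\rho,\lambda} + \gamma\,\E_{s\sim\visitbehavior}[\delta(s)]\bigr)$
and the bound $\E_{s\sim\visitbehavior}[\delta(s)] \ge -L^{\prime\prime}_{\pibehavior,\rho,\lambda}\max_{s'}|W(s')|$; those steps check out. The gap is exactly where you predicted it: the domination $\sup_s h(s) \le \max_{s_0} L^{\prime\prime\prime}_{\pibehavior,\rho,\lambda}(s_0)$ is false under the normalization the theorem uses. Since $\visitbehavior(\cdot\mid s_0)$ is the \emph{normalized} discounted occupancy (total mass $1$, consistent with the $\tfrac{1}{1-\gamma}$ prefactors elsewhere in the statement), the $t=0$ term contributes only $(1-\gamma)h(s_0)$, so all you get is $L^{\prime\prime\prime}_{\pibehavior,\rho,\lambda}(s_0) \ge (1-\gamma)h(s_0)$. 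A two-state example kills the stronger claim: if $h(s_0)=1$ and the chain immediately moves to an absorbing state where $\rho=\pibehavior$ (so $h=0$ there), then $\max_{s_0}L^{\prime\prime\prime}_{\pibehavior,\rho,\lambda}(s_0) = 1-\gamma < 1 = \sup_s h(s)$. Plugging in the correct inequality, your argument delivers
$J_\pi - J_\pibehavior \ge \tfrac{1}{1-\gamma}\bigl(L^{\prime}_{\pibehavior,\rho,\lambda} - \tfrac{\gamma}{(1-\gamma)^2}L^{\prime\prime}_{\pibehavior,\rho,\lambda}\max_{s_0}L^{\prime\prime\prime}_{\pibehavior,\rho,\lambda}(s_0)\bigr)$,
which is weaker than the stated bound by a factor of $\tfrac{1}{1-\gamma}$ in the penalty. (You cannot rescue this by switching $\visitbehavior(s_0)$ to the unnormalized occupancy in $L^{\prime\prime\prime}$ alone, since the same convention governs $L^{\prime}$, $L^{\prime\prime}$ and the outer $\tfrac{1}{1-\gamma}$.)

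The deeper issue is that your route collapses the future residual value $W(s')$ to a sup-norm of the \emph{instantaneous} residual reward $h$, and then tries to re-inflate that sup-norm into the discounted average $L^{\prime\prime\prime}$ through its $t=0$ atom --- which is precisely where the $(1-\gamma)$ is lost. The paper avoids this detour entirely: it works with the occupancy-measure difference, writing $\visitpi - \visitbehavior = P_0^\top(I-\gamma T_\pibehavior)^{-1}\gamma\Delta T_{\pibehavior,\rho,\lambda}(I-(I-\gamma T_\pibehavior)^{-1}\gamma\Delta T_{\pibehavior,\rho,\lambda})^{-1}(I-\gamma T_\pibehavior)^{-1}$, and then applies H\"older so that the rightmost factor is the vector $D_\pibehavior\Delta A_{\pibehavior,\rho,\lambda}$, whose $\ell_\infty$ norm \emph{is} $\max_{s_0}L^{\prime\prime\prime}_{\pibehavior,\rho,\lambda}(s_0)$ directly (the $\pibehavior$-discounted average of $\Delta A$ from the worst start state), with the middle resolvent contributing the single $\tfrac{1}{1-\gamma}$. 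To repair your proof you would need to bound $|W(s')|$ by a $\pibehavior$-discounted average of $h$ rather than by $\tfrac{1}{1-\gamma}\sup_s h$, and since $W$ is a value under $\pi$ rather than $\pibehavior$, that change of policy in the trajectory average is the substantive content your argument is missing.
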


Unlike the difference-value formulations in Lemma \ref{lem:main} and Theorem \ref{thm:main}, which require the knowledge of advantage function $A_\pi$ or the trajectory samples generated by $\pi$, the lower bound in Theorem \ref{thm:cpi_2} is comprised only of terms that can be estimated directly
using the data batch $B$ (i.e., data generated by $\beta$). This makes it a natural objective function for batch RL. Notice also that the surrogate objective, the regularization, and the penalty weight in the lower bound are each
proportional to the confidence and to the relative difference of the candidate and behavior policies. 
However, the $\max$ operator requires state enumeration to compute this lower bound, which is intractable when $S$ is large or uncountable. 

We address this by introducing a slack variable $\kappa\geq 0$ to replace the $\max$-operator with suitable constraints. This allows the bound on the difference return to be rewritten as:
$
J_\pi - J_\beta \ge \frac{1}{1-\gamma} L^{\prime}_{\pibehavior,\rho,\lambda} -\min_{\kappa\geq L^{\prime\prime\prime}_{\pibehavior,\rho,\lambda}(s_0),\,\,\forall s_0} \frac{\gamma}{(1-\gamma)^2} L^{\prime\prime}_{\pibehavior,\rho,\lambda} \cdot \kappa.
$
Consider the Lagrangian of the lower bound:
\begin{small}
\begin{equation*}
\begin{split}
    \frac{L^{\prime}_{\pibehavior,\rho,\lambda}}{1-\gamma}
     - \min_{\kappa\geq 0}&\max_{\eta(s)\geq 0,\forall s}\frac{\gamma \cdot L^{\prime\prime}_{\pibehavior,\rho,\lambda} \cdot \kappa}{(1-\gamma)^2} -\sum_{s}\eta(s)(\kappa- L^{\prime\prime\prime}_{\pibehavior,\rho,\lambda}(s)).
    \end{split}
\end{equation*}
\end{small}
To simplify this saddle-point problem, we restrict the Lagrange multiplier to be $\eta(s)=\eta\cdot P_0(s)\geq 0$, where $\eta\geq 0$ is a scalar multiplier. Using this approximation and the strong duality of linear programming \citep{boyd2004convex} over primal-dual variables $(\kappa,\eta)$, the saddle-point problem on $(\lambda,\rho,\eta,\kappa)$ can be re-written as
\begin{small}
\begin{align}
  \mathcal{L}_{\pibehavior,\rho,\lambda}:=&\max_{\eta\geq 0}\min_{\kappa\geq 0}  \frac{L^{\prime}_{\pibehavior,\rho,\lambda} -  L^{\prime\prime}_{\pibehavior,\rho,\lambda}\cdot\kappa\cdot\frac{\gamma}{1-\gamma}-\eta\cdot\kappa+\eta L^{\prime\prime\prime}_{\pibehavior,\rho,\lambda}}{1-\gamma}\nonumber\\
    =&\frac{1}{1-\gamma}\left( 
    L^{\prime}_{\pibehavior,\rho,\lambda} -  \frac{\gamma}{1-\gamma} L^{\prime\prime}_{\pibehavior,\rho,\lambda}\cdot L^{\prime\prime\prime}_{\pibehavior,\rho,\lambda}
    \right),\label{bdd:saddle_point_approx}
    \end{align}
\end{small}
where $L^{\prime\prime\prime}_{\pibehavior,\rho,\lambda}=\mathbb E_{s\sim P_0}[L^{\prime\prime\prime}_{\pibehavior,\rho,\lambda}(s)]$. The equality is based on the KKT condition on $(\kappa,\eta)$. Notice that the only difference between the CPI lower bound in Theorem \ref{thm:cpi_2} and the objective function $\mathcal{L}_{\pibehavior,\rho,\lambda}$ is that the $\max$ operator is replaced by expectation w.r.t\ the initial distribution. 

With certain assumptions on the approximation error of the Lagrange multiplier parametrization $\eta(s)\approx P_0(s)$, we can characterize the gap between the original CPI objective function in Theorem~\ref{thm:cpi_2} and $\mathcal{L}_{\pibehavior,\rho,\lambda}$. 
One approach is to look into the KKT condition of the original saddle-point problem and bound the sub-optimality gap introduced by this Lagrange parameterization. Similar derivations can be found in the analysis of approximate linear programming (ALP) algorithms \citep{abbasi2019large,de2003linear}.

Compared with the vanilla CPI result from Lemma \ref{lem:cpi_1}, there are two characteristics in problem (\ref{bdd:saddle_point_approx}) that make the optimization w.r.t.\ $\mathcal{L}_{\pibehavior,\rho,\lambda}$ less conservative. First, the penalty weight $L^{\prime\prime\prime}_{\pibehavior,\rho,\lambda}$ here is smaller than $\epsilon_{U,\pibehavior,\rho,\lambda}$ in Lemma~\ref{lem:cpi_1}, which means that the corresponding objective has less incentive to force $\rho$ to be close to $\pibehavior$.
Second, compared with entropy regularization in vanilla CPI, here the regularization and penalty weight are both linear in $\lambda\in \Lambda\subseteq[0,1]^{|A|}$; thus, unlike vanilla CPI, whose objective  is linear in $\lambda$,
our objective is quadratic in $\lambda$---this modification ensures the optimal value is not
a \emph{degenerate extreme point} of $\Lambda$.\footnote{For example, when $\lambda$ is state-dependent (which automatically satisfies the equality constraints in $\Lambda$), the linear objective in vanilla CPI makes the optimal value $\lambda^*(\cdot|s)$ a \emph{0-1 vector}. Especially when $\frac{2 \gamma}{1-\gamma}\E_{s\sim \visitbehavior}\left[
    \sqrt{\frac{1}{2}
    \kl{\pibehavior(s)}{\rho(s)}
    }\,\right]$ is large, then most entries of $\lambda^*$ become zero, i.e., $\pi$ will be very close to $\pibehavior$.}

\section{The BRPO Algorithm}\label{sec:algorithm}

We now develop the BRPO algorithm, for which the general pseudo-code is given in Algorithm \ref{alg:train}. Recall
that if the candidate policy $\rho$ and confidence $\lambda$ are jointly optimized
\begin{small}
\begin{equation}\label{problem:crpo}
  (\rho^*,\lambda^*)\!\in\!\argmax_{\lambda\in\Lambda,\,\,\rho\in\Delta} 
   \mathcal L_{\pibehavior,\rho,\lambda},
\end{equation}
\end{small}
then the residual policy $\pi^*(a|s)= (1-\lambda^*(s, a))\pibehavior(a|s)+\lambda^*(s ,a)\rho^*(a|s)$  performs no worse than behavior policy $\pibehavior$. Generally, solutions for problem (\ref{problem:crpo}) use a form of minorization-maximization (MM) \citep{hunter2004tutorial}, a class of methods that
also includes expectation maximization. In the terminology of MM algorithms, $\mathcal L_{\pibehavior,\rho,\lambda}$ is a surrogate function satisfying the following \emph{MM properties}:
\begin{equation}\label{eq:MM}
J_\pi-J_\pibehavior\geq \mathcal L_{\pibehavior,\rho,\lambda},\,\,\,
J_\pibehavior-J_\pibehavior\!=\! \mathcal L_{\pibehavior,\beta,\lambda}\!=\! \mathcal L_{\pibehavior,\rho,0}=0, 
\end{equation}
which guarantees that it minorizes the difference-return $J_\pi-J_\pibehavior$ with equality at $\lambda=0$ (with arbitrary $\rho$) or at $\rho=\beta$ (with arbitrary $\lambda$). This algorithm is also reminiscent of proximal gradient methods.
We optimize
$\lambda$ and $\rho$ in RPO with a simple two-step \emph{coordinate-ascent}. Specifically, at iteration $k\in\{0,1,\ldots,K\}$, given confidence $\lambda_{k-1}$, we first compute an updated candidate policy $\rho_{k}$, and with $\rho_{k}$ fixed, we update $\lambda_k$, i.e., $\mathcal L_{\pibehavior,\rho_{k},\lambda_k}\geq \mathcal L_{\pibehavior,\rho_{k},\lambda_{k-1}}\geq \mathcal L_{\pibehavior,\rho_{k-1},\lambda_{k-1}}$.  When $\lambda$ and $\rho$ are represented tabularly or with linear function approximators, under certain regularity assumptions (the  Kurdyka-Lojasiewicz property \citep{xu2013block}) coordinate ascent guarantees global convergence (to the limit point) for BRPO.

However, when more complex representations (e.g., neural networks) are used to parameterize these decision variables, this property no longer holds. While one may still compute $(\lambda^*,\rho^*)$  with first-order methods (e.g., SGD), convergence to local optima is not guaranteed. To address this, we next further restrict the MM procedure to develop closed-form solutions for both the candidate policy
and the confidence.

\paragraph{The Closed-form Candidate Policy $\rho$.}
To effectively update the candidate policy when given the confidence $\lambda\in\Lambda$, we
develop a closed-form solution for $\rho$. Our approach is based on maximizing the following objective, itself a more conservative version of the CPI lower bound $\mathcal L_{\pibehavior,\rho,\lambda}$:
\begin{small}
\begin{align}
&\max_{\rho\in\Delta}
    \,\widehat{\mathcal L}_{\pibehavior,\rho,\lambda}
    :=\E_{s\sim \visitbehavior}\!\bigg[\!
    \E_{a\sim\pibehavior}\left[\mix \frac{\rho(a|s)-\pibehavior(a|s)}{\pibehavior(a|s)} 
    A_{\pibehavior}\right]\nonumber\\
    &\quad- \frac{\gamma\max\{\kappa_{\lambda}(s),\kappa_{|A_{\pibehavior} |\lambda}(s)\}}{2(1-\gamma)}\cdot
    \kl{\rho}{\pibehavior}(s)
    \bigg]\cdot \frac{1}{1-\gamma},\label{problem:crpo_mu_2}
    \end{align}
\end{small}
where $\kappa_g(s)= (1+ \log \mathbb E_\pibehavior[\exp(g(a|s)^2)])>0$ for any arbitrary non-negative function $g$. To show that 
$\widehat{\mathcal L}_{\pibehavior,\rho,\lambda}$ in (\ref{problem:crpo_mu_2}) is an eligible lower bound (so that the corresponding $\rho$-solution is an MM), we need to show that it satisfies the properties in (\ref{eq:MM}). When $\rho=\pibehavior$, by the definition of $\widehat{\mathcal L}_{\pibehavior,\rho,\lambda}$ the second property holds. To show the first property, we first consider the following problem:
\begin{small}
\begin{equation}\label{problem:crpo_mu_ori}
  \max_{\rho\in\Delta}\frac{1}{1-\gamma}\left(
    L^{\prime}_{\pibehavior,\rho,\lambda}  - \frac{\gamma}{1-\gamma} \widetilde L^{\prime\prime}_{\pibehavior,\rho,\lambda}\cdot \widetilde L^{\prime\prime\prime}_{\pibehavior,\rho,\lambda}\right),
\end{equation}
\end{small}
where $ L^{\prime}_{\pibehavior}(\rho,\lambda)$ is given in Theorem \ref{thm:cpi_2}, and
\begin{small}
\[
\begin{split}
\widetilde L^{\prime\prime}_{\pibehavior,\rho,\lambda}=&\E_{s\sim \visitbehavior}\left[\sqrt{\kappa_\lambda(s)\cdot \kl{\rho}{\pibehavior}(s)/2}\right], \\
\widetilde L^{\prime\prime\prime}_{\pibehavior,\rho,\lambda}=& \E_{s\sim \visitbehavior} \left[\sqrt{{ \kappa_{|A_{\pibehavior}| \lambda}(s)\cdot \kl{\rho}{\pibehavior}(s)}/{2}} \right].
\end{split}
\]
\end{small}
The concavity of $\sqrt{(\cdot)}$ (i.e., $\mathbb E_{s\sim \visitbehavior}[\sqrt{(\cdot)}]\leq \sqrt{\mathbb E_{s\sim \visitbehavior}[(\cdot)]}$) and monotonicity of expectation
imply that the objective in (\ref{problem:crpo_mu_2}) is a lower bound of that in (\ref{problem:crpo_mu}) below. Furthermore, by the weighted Pinsker's inequality \citep{bolley2005weighted}
$\sum_a|g(a|s)(\rho(a|s)-\beta(a|s))|\leq \sqrt{\kappa_g(s)
\kl{\rho}{\pibehavior}(s)/2}$, we have: 
(i) $0\leq \widetilde L^{\prime\prime}_{\pibehavior,\rho,\lambda}\leq L^{\prime\prime}_{\pibehavior,\rho,\lambda}$;
and (ii) $0\leq \widetilde L^{\prime\prime\prime}_{\pibehavior,\rho,\lambda}\leq L^{\prime\prime\prime}_{\pibehavior,\rho,\lambda}$, which implies the objective in (\ref{problem:crpo_mu_ori}) is a lower-bound of that in (\ref{problem:crpo}) and validates the first MM property.

Now recall the optimization problem: $\max_{\rho\in\Delta}
    \,\widehat{\mathcal L}_{\pibehavior,\rho,\lambda}$.
    Since this optimization is over the state-action mapping $\rho$, the Interchangeability Lemma \citep{shapiro2009lectures} allows swapping the order of $\E_{s\sim \visitbehavior}$ and $\max_{\rho\in\Delta}$.
    This implies that at each $s\in S$ the candidate policy can be solved using:
\begin{small}
\begin{align}
\rho^*_\lambda\!\in\!&\argmax_{\rho(\cdot|s)\in\Delta}
   \mathbb E_{a\sim\pibehavior}\!\left[\left(\! \lambda\cdot\frac{\rho-\pibehavior}{\pibehavior}\cdot A_{\pibehavior} \right)\!(s,a) - \tau_\lambda(s)\log \frac{\rho(a|s)}{\pibehavior(a|s)}\right]\nonumber\\
=&\argmax_{\rho(\cdot|s)\in\Delta}
   \mathbb E_{a\sim\rho}\!\left[ \mix A_{\pibehavior} \!-\! \tau_\lambda(s)\log \frac{\rho(a|s)}{\pibehavior(a|s)}\right],\label{problem:crpo_mu}
\end{align}
\end{small}
where $\tau_\lambda(s)={\gamma\max\{\kappa_\lambda(s),\kappa_{|A_{\pibehavior}|\lambda}(s)\}}/{(2-2\gamma)}$ is the state-dependent penalty weight of the relative entropy regularization.
By the KKT condition of~(\ref{problem:crpo_mu}),  the optimal candidate policy $\rho^*_\lambda$ has the form
\begin{small}
\begin{equation}\label{eq:relateive_soft}
\rho^*_{\lambda}(a|s)=\frac{\pibehavior(a|s)\cdot\exp\left(\frac{\mix A_{\pibehavior}}{\tau_\lambda(s)}\right)}{\mathbb E_{a'\sim\pibehavior}[\exp({\mix[(s, a')] A_{\pibehavior}(s,a')}/{\tau_{\lambda}(s)})]}.
\end{equation}
\end{small}
Notice that the optimal candidate policy is a \emph{relative softmax policy}, which is a common solution policy for many entropy-regularized RL algorithms 
\citep{haarnoja2018soft}.
Intuitively, when the mixing factor vanishes (i.e., $\mix =0$), the candidate policy equals to the behavior policy, and with confidence we obtain the candidate policy by modifying the behavior policy $\pibehavior$ via \emph{exponential twisting}.



\paragraph{The Closed-form Confidence $\lambda$.}
Given candidate policy $\rho$, we derive efficient scheme for computing the confidence that solves the MM problem: $\max_{\lambda\in\Lambda} 
   \mathcal L_{\pibehavior,\rho,\lambda}$. Recall that this optimization can be reformulated as a concave quadratic program (QP) with linear equality constraints, which has a unique optimal solution \citep{faybusovich1997infinite}. However, since the decision variable (i.e., the confidence mapping) is infinite-dimensional, solving this QP is intractable without some assumptions about this mapping,  
   To resolve this issue, instead of using the surrogate objective $\mathcal L_{\pibehavior,\rho,\lambda}$ in MM, we turn to its sample-based estimate.
   Specifically, given a batch of data $B=\{(s_i,a_i,r_i,s'_i)\}_{i=1}^{|B|}$ generated by the behavior policy $\pibehavior$, denote by 
   \begin{small}
   \[
   \begin{split}
   &\overline{L}^{\prime}_{\pibehavior,\rho,\lambda}:=\frac{1}{1-\gamma}\cdot\frac{1}{|B|}\sum_{i=1}^{|B|}\overline\lambda_i^\top\cdot \left((\rho-\pibehavior)
    \cdot A_{\pibehavior}\right)_i\\
   &\overline{L}^{\prime\prime}_{\pibehavior,\rho,\lambda}:=\frac{1}{1-\gamma}\cdot\frac{1}{|B|}\sum_{i=1}^{|B|}\overline\lambda_i^\top\cdot |\rho-\pibehavior|_i\\
   &\overline{L}^{\prime\prime\prime}_{\pibehavior,\rho,\lambda}:=\frac{\gamma}{1-\gamma}\cdot\frac{1}{|B|}\sum_{i=1}^{|B|}\overline\lambda_i^\top\cdot \left(|\rho-\pibehavior|\cdot|A_{\pibehavior}|
   \right)_i
   \end{split}
   \]
   \end{small}
   the sample-average approximation (SAA) of functions ${L}^{\prime}_{\pibehavior,\rho,\lambda}$, ${L}^{\prime\prime}_{\pibehavior,\rho,\lambda}$, and ${L}^{\prime\prime\prime}_{\pibehavior,\rho,\lambda}$ respectively, where $((\rho-\pibehavior)
    \cdot A_{\pibehavior})=\{(\rho(\cdot|s_i)-\pibehavior(\cdot|s_i))
    \cdot A_{\pibehavior}(s_i,\cdot) \}_{s_i\in B}$,  $\left(|\rho-\pibehavior|
    \cdot |A_{\pibehavior} |\right)=\{(|\rho(\cdot|s_i)-\pibehavior(\cdot|s_i)| 
    \cdot |A_{\pibehavior}(s_i,\cdot)|)\}_{s_i\in B}$, and $|\rho-\pibehavior|=\{|\rho(\cdot|s_i)-\pibehavior(\cdot|s_i)|\}_{s_i\in B}$ are $|A|\cdot|B|$-dimensional vectors, where each element is generated by a state sample from $B$, and $\overline\lambda=\{\lambda(\cdot|s_i)\}_{s_i\in B}$ is a $|A|\cdot|B|$-dimensional decision vector, where each $|A|$-dimensional element vector corresponds to the confidence w.r.t. state samples in $B$.
   Since the expectation in ${L}^{\prime}_{\pibehavior,\rho,\lambda}$, ${L}^{\prime\prime}_{\pibehavior,\rho,\lambda}$, and ${L}^{\prime\prime\prime}_{\pibehavior,\rho,\lambda}$ is over the stationary distribution induced by the behavior policy, all the SAA functions are \emph{unbiased} Monte-Carlo estimates of their population-based counterparts. We now define $\overline{\mathcal L}_{\pibehavior,\rho,\lambda}:=\overline{L}^{\prime}_{\pibehavior,\rho,\lambda}- \overline{L}^{\prime\prime}_{\pibehavior,\rho,\lambda}\overline{L}^{\prime\prime\prime}_{\pibehavior,\rho,\lambda}$ as the SAA-MM objective and use this to solve for the confidence vector $\overline\lambda$ over the batch samples.
   
   Now consider the following maximization problem:
   \begin{small}\begin{equation}
  \max_{\overline\lambda\in\overline \Lambda}\langle(\rho-\pibehavior)
    \cdot A_{\pibehavior},\overline\lambda\rangle-\frac{\gamma}{|B|(1-\gamma)}\langle|\rho-\pibehavior|,\overline\lambda\rangle\cdot\langle|\rho-\pibehavior||A_{\pibehavior}|,\overline\lambda\rangle,
\end{equation}
\end{small}
where the feasible set  $\overline\Lambda=\{\lambda\in[0,1]:\sum_{a\in \Aset}\mix[(s_i, a)]\cdot (\rho(a|s_i)-\pibehavior(a|s_i))\cdot=0,\forall i\in \{1,\ldots,|B|\}\}$ only imposes constraints on the states that appear in the batch $B$.

This finite-dimensional QP problem can be expressed in the following quadratic form:
\begin{small}
\[
\max_{\overline\lambda\in\overline \Lambda}\,\,\overline \lambda^\top\Big((\rho-\pibehavior)
    \cdot A_{\pibehavior} \Big)-\frac{1}{2}\cdot\overline \lambda^\top \Theta\overline\lambda,
\]
\end{small}
where the symmetric matrix is given by
\begin{small}
\[
\Theta_{\pibehavior,\rho}:=\frac{\gamma (D_{|A_{\pibehavior}|}\cdot |\rho-\pibehavior|\cdot|\rho-\pibehavior|^\top+ |\rho-\pibehavior|\cdot |\rho-\pibehavior|^\top\cdot D_{|A_{\pibehavior}|}^\top)}{|B|(1-\gamma)},
\]
\end{small}
and
$D_{| A_{\pibehavior} |}=\text{diag}(\{|A_{\pibehavior} |\}_{a\in \Aset,s\in B})$ is a $|B|\cdot|\Aset|\times|B|\cdot|\Aset|$-diagonal matrix whose elements are the absolute advantage function. By definition, $\Theta$ is positive-semi-definite, hence the QP above is concave. 
Using its KKT condition, the unique optimal confidence vector over batch $B$ is given as
\begin{equation}\label{eq:lambda_vector}
\overline\lambda^* =\min\{1,\max\{0,\Theta_{\pibehavior,\rho}^{-1}((\rho-\pibehavior)
\cdot A_{\pibehavior} + M_{\pibehavior,\rho}^\top\nu_{\pibehavior,\rho}\}\},
\end{equation}
where $M_{\pibehavior,\rho}=\text{blkdiag}(\{[\rho(a|x)-\pibehavior(a|x)]_{a\in A}\}_{x\in B})$ is a $|B|\cdot |A|\times |B|$-matrix,  and the Lagrange multiplier $\nu_{\pibehavior,\rho}\in\mathbb{R}^{|B|}$ w.r.t. constraint $M_{\pibehavior,\rho}\overline \lambda=0$ is given by
\begin{equation}
\nu_{\pibehavior,\rho}\!=\!-(M_{\pibehavior,\rho}^\top\Theta^{-1}_{\pibehavior,\rho}M_{\pibehavior,\rho})^{-1}(M_{\pibehavior,\rho}^\top\Theta_{\pibehavior,\rho}^{-1}(\rho-\pibehavior)
\cdot A_{\pibehavior}).
\end{equation}
We first construct the confidence function $\lambda(s,a)$ from the confidence vector $\overline\lambda^*$ over $B$, 
in the following tabular fashion:
$$
\lambda(s,a)=\left\{\begin{array}{cl}
   \overline\lambda^*_{s,a}  &  \text{if $(s,a)\in B$}\\
    0 & \text{otherwise}
\end{array}\right. .
$$
While this construction preserves optimality w.r.t.\ the CPI objective (\ref{problem:crpo}), it may be overly conservative, because the policy equates to the behavior policy by setting $\lambda=0$ at state-action pairs that are not in $B$ (i.e., no policy improvement). To alleviate this conservatism, we propose to learn a confidence function that generalizes to out-of-distribution samples.

\begin{table*}[th!]
\scriptsize
\setlength\tabcolsep{1.5pt}
\centering
\begin{tabular}{|c|c|c |c|c|c |c|c|c|c|}
\hline
Environment-$\varepsilon$ & DQN & BRPO-C & BRPO (ours) & BCQ & KL-Q & SPIBB & BC & Behavior Policy\\
\hline
Acrobot-0.05 & \tb{-91.2 $\pm$ 9.1} & -94.6 $\pm$ 3.8 & \tb{-91.9 $\pm$ 9.0} & -96.9 $\pm$ 3.7 & -93.0 $\pm$ 2.6 & -103.5 $\pm$ 24.1 & -102.3 $\pm$ 5.0 & -103.9\\
Acrobot-0.15 & \tb{-83.1 $\pm$ 5.2} & -91.7 $\pm$ 4.0 & \tb{-86.1 $\pm$ 10.1} & -97.1 $\pm$ 3.3 & -92.1 $\pm$ 3.2 & -91.1 $\pm$ 44.8 & -113.1 $\pm$ 5.6 & -114.3\\
Acrobot-0.25 & \tb{-83.4 $\pm$ 3.9} & -91.2 $\pm$ 4.1 & \tb{-85.3 $\pm$ 4.8} & -96.7 $\pm$ 3.1 & -90.0 $\pm$ 2.9 & -86.0 $\pm$ 5.8 & -124.1 $\pm$ 7.0 & -127.2\\
Acrobot-0.50 & -84.3 $\pm$ 22.6 & -90.9 $\pm$ 3.4 & \tb{-83.7 $\pm$ 16.6} & \tb{-77.8 $\pm$ 13.5} & -84.5 $\pm$ 3.8 & -106.8 $\pm$ 102.7 & -173.7 $\pm$ 8.1 & -172.4\\
Acrobot-1.00 & -208.9 $\pm$ 174.8 & \tb{-156.8 $\pm$ 22.0} & \tb{-121.7 $\pm$ 10.2} & -236.0 $\pm$ 85.6 & -227.5 $\pm$ 148.1 & -184.8 $\pm$ 150.2 & -498.3 $\pm$ 1.7 & -497.3\\
\hline
CartPole-0.05 & 82.7 $\pm$ 0.5 & 220.8 $\pm$ 117.0 & \tb{336.3 $\pm$ 122.6} & 255.4 $\pm$ 11.1 & \tb{323.0 $\pm$ 13.5} & 28.8 $\pm$ 1.2 & 205.6 $\pm$ 19.6 & 219.1\\
CartPole-0.15 & 299.3 $\pm$ 133.5 & 305.6 $\pm$ 95.2 & \tb{409.9 $\pm$ 64.4} & 255.3 $\pm$ 11.4 & \tb{357.7 $\pm$ 84.1} & 137.7 $\pm$ 11.7 & 151.6 $\pm$ 27.5 & 149.5\\
CartPole-0.25 & 368.5 $\pm$ 129.3 & \tb{405.1 $\pm$ 74.4} & 316.8 $\pm$ 64.1 & 247.4 $\pm$ 128.7 & \tb{441.4 $\pm$ 79.8} & 305.2 $\pm$ 119.7 & 103.0 $\pm$ 20.4 & 101.9\\
CartPole-0.50 & 271.5 $\pm$ 52.0 & \tb{358.3 $\pm$ 114.1} & \tb{433.8 $\pm$ 93.5} & 282.5 $\pm$ 111.8 & 314.1 $\pm$ 107.0 & 310.4 $\pm$ 128.0 & 39.7 $\pm$ 5.1 & 37.9\\
CartPole-1.00 & 118.3 $\pm$ 0.3 & \tb{458.6 $\pm$ 51.5} & \tb{369.0 $\pm$ 42.3} & 194.0 $\pm$ 25.1 & 209.7 $\pm$ 48.4 & 147.1 $\pm$ 0.1 & 22.6 $\pm$ 1.5 & 21.9\\
\hline
LunarLander-0.05 & -236.4 $\pm$ 177.6 & 35.6 $\pm$ 61.7 & \tb{88.2 $\pm$ 32.0} & 81.5 $\pm$ 14.9 & \tb{84.4 $\pm$ 26.3} & -200.4 $\pm$ 81.7 & 75.8 $\pm$ 17.7 & 73.7\\
LunarLander-0.15 & -215.6 $\pm$ 140.4 & 79.6 $\pm$ 29.7 & \tb{103.9 $\pm$ 49.8} & 80.3 $\pm$ 16.8 & 61.4 $\pm$ 39.0 & \tb{86.1 $\pm$ 73.3} & 76.4 $\pm$ 16.6 & 84.9\\
LunarLander-0.25 & 2.5 $\pm$ 101.3 & 109.5 $\pm$ 40.7 & \tb{141.6 $\pm$ 11.0} & 83.5 $\pm$ 14.6 & 78.7 $\pm$ 48.8 & \tb{166.0 $\pm$ 90.6} & 57.9 $\pm$ 13.1 & 57.3\\
LunarLander-0.50 & -104.6 $\pm$ 68.3 & 42.5 $\pm$ 71.4 & \tb{101.0 $\pm$ 39.6} & -13.2 $\pm$ 44.9 & \tb{66.2 $\pm$ 78.0} & -134.6 $\pm$ 17.1 & -32.6 $\pm$ 6.5 & -36.0\\
LunarLander-1.00 & -65.6 $\pm$ 45.9 & \tb{53.5 $\pm$ 44.1} & \tb{81.8 $\pm$ 42.1} & -69.1 $\pm$ 44.0 & -139.2 $\pm$ 29.1 & -107.1 $\pm$ 94.4 & -177.4 $\pm$ 13.1 & -182.6\\
\hline
\end{tabular}
\vspace{-3mm}
\caption{The mean and st. dev. of average return with the best hyperparameter configuration (with the top-2 results boldfaced). 
Full training curves are given in 
Figure~\ref{fig:exp1_best_mean} in the appendix. For \BCRPO{}-C, the optimal confidence parameter is found by grid search.}
\label{table:exp1_best_mean_new}
\end{table*}

\paragraph{Learning the Confidence.}
Given a confidence vector $\overline\lambda^*$ corresponding to samples in batch $B$, we learn the confidence function $\lambda_\phi(s,a)$ in supervised fashion. To ensure that the  confidence function satisfies the constraint: $\lambda_\phi\in\Lambda$, i.e., $\sum_a\lambda_{\phi}(s,a)(\rho(a|s)-\pibehavior(a|s))=0$, $\lambda_{\phi}(s,a)\in[0,1]$, $\forall s,a$\footnote{If one restricts $\lambda_\phi$ to be only \emph{state}-dependent,  this constraint immediately holds.}, we parameterize it as
\begin{small}
\begin{equation}
\lambda_{\phi^*}(s,a):=\frac{\pi_{\phi^*}(a|s)-\pibehavior(a|s)}{\rho(a|s)-\pibehavior(a|s)},\,\,\forall (s,a)\in S\times A,
\end{equation}
\end{small}
where $\pi_\phi\in\Delta$ is a learnable policy mapping, such that $\min\{\pibehavior(a|s),\rho(a|s)\}\leq\pi_\phi(a|s)\leq\max\{\pibehavior(a|s),\rho(a|s)\}$, $\forall s,a$. We then learn $\phi$ via the following KL distribution-fitting objective
\citep{rusu2015policy}:
\begin{small}
\[
\min_{\phi}\frac{1}{B}\sum_{(s,a)\in B}\!\!\pi_\phi(a|s)\log\left(\frac{\pi_\phi(a|s)}{(1-\overline\lambda^*_{s,a})\pibehavior(a|s)+\overline\lambda^*_{s,a}\cdot\rho(a|s)}\right).
\]
\end{small}
While this approach learns 
$\lambda_\phi$ by generalizing the confidence vector to out-of-distribution samples, when $\pi_\phi$ is a NN, one challenge is to 
enforce the constraint: $\min\{\pibehavior(a|s),\rho(a|s)\}\leq\pi_\phi(a|s)\leq\max\{\pibehavior(a|s),\rho(a|s)\}$, $\forall s,a$. Instead, 
using an in-graph convex optimization NN \citep{amos2017optnet}, we parameterize $\lambda_\phi$ with a NN with the following \emph{constraint-projection layer} $\Phi:S\rightarrow A$ before the output:
\begin{align}
\Phi(s)\in&\arg\min_{\lambda\in \mathbb{R}^{|A|}}\,\frac{1}{2}\sum_{a\in A}\|\lambda_a-\widetilde\lambda^*_{s,a}\|^2,\nonumber\\
\text{s.t.}&\, \sum_a\lambda_a(\rho(a|s)-\pibehavior(a|s))=0,\,\,0\leq \lambda\leq 1,\label{eq:optimize_lambda}
\end{align}
where, at any $s\in S$, the $|A|$-dimensional confidence vector label $\{\widetilde\lambda^*_{s,a}\}_{a\in A}$ is equal to 
$\{\overline\lambda^*_{\overline{s},a}\}_{a\in A}$ chosen from the batch confidence vector $\overline\lambda^*$ such that $\overline{s}$ in $B$ is closest to $s$. Indeed, analogous to the closed-form solution 
in (\ref{eq:lambda_vector}), this projection layer has a closed-form QP formulation
with linear constraints:
$
\Phi(s)=\min\{1,\max\{0,\widetilde\lambda^*_{s,\cdot}+(\rho(\cdot|s)-\beta(\cdot|s))\cdot\mu_{\beta,\rho}\}\}
$,
where Lagrange multiplier $\mu_{\beta,\rho}$ is given by
$
\mu_{\pibehavior,\rho}\!=\!-{(\rho(\cdot|s)-\beta(\cdot|s))^\top\widetilde\lambda^*_{s,\cdot}}/{\|\rho(\cdot|s)-\beta(\cdot|s)\|^2}.
$

Although the $\rho$-update is theoretically justified, in practice,
when the magnitude of $\kappa_\lambda(s)$ becomes large
(due to the conservatism of the weighted Pinsker inequality),
the relative-softmax candidate policy (\ref{eq:relateive_soft}) may be too close to the behavior policy $\pibehavior$, impeding learning of the residual policy (i.e., $\pi\approx\pibehavior$). To avoid this in practice, we can upper bound the temperature, i.e., $\kappa_\lambda(s) \leftarrow\min\{\kappa_{\max},\kappa_\lambda(s)\}$, or introduce a weak temperature-decay schedule, i.e., $\kappa_\lambda(s) \leftarrow \kappa_\lambda(s) \cdot \epsilon^k$, with a tunable $\epsilon\in[0,1)$.

\vspace{-0.05in}
\begin{algorithm}[th!]
\caption{BRPO algorithm}\label{alg:train}
\begin{algorithmic}[1]
\Require{ $B$: batch data; Tunable parameter $\mu\in[0,1]$ }
\For{$t=1,\ldots,N$}
    \State Sample mini-batch of transitions $(s, a, r, d, s')\sim B$
    \State Compute $\overline\lambda^*$ from Eq.~(\ref{eq:lambda_vector})
    \State Update confidence $\phi^*$ by Eq.~(\ref{eq:optimize_lambda})
        \State Update candidate policy $\rho^*_{\lambda_{\phi^*}}$ by Eq.~(\ref{eq:relateive_soft})
    \State Construct target critic network $V_{\theta'}(s') \!:=\! (1-\mu)\mathbb{E}_{a'\sim\beta}[Q_{\theta'}(s',a')] + \mu \max_{a'} Q_{\theta'}(s',a')$
    
    \Comment{See section~\ref{sec:bcrpo} for the analysis in the case $\mu=1$}
    \State Update $\theta \leftarrow \argmax_{\theta}\frac{1}{2}\left(
        Q_{\theta}(s,a) - r - \gamma V_{\theta'}(s')\right)^2$
    \State Update target network: $\theta'\leftarrow \tau \theta + (1-\tau)\theta'$
\EndFor
\end{algorithmic}
\end{algorithm}
\vspace{-0.05in}


\section{Experimental Results}\label{sec:experiments}
To illustrate the effectiveness of \BCRPO{}, we compare against six baselines: DQN~\citep{mnih2013playing},
discrete BCQ~\citep{fujimoto2019benchmarking},
KL-regularized Q-learning (KL-Q)~\citep{jaques2019way},
SPIBB~\citep{laroche2017safe}, Behavior Cloning (BC)~\citep{kober2010imitation}, and \BCRPO{}-C, which is a simplified version of BRPO that uses a constant (tunable) parameter as confidence weight\footnote{For algorithms designed for online settings, we modify data collection to sample only from offline / batch data.}.
We do not consider ensemble models, thus do not include methods like BEAR~\citep{kumar2019stabilizing} among our baselines. CPI is also excluded since it is subsumed by \BCRPO{}-C with a grid search on the confidence. It is also generally inferior to \BCRPO{}-C because candidate policy learning does not optimize the performance of the final mixture policy. We evaluated on three discrete-action OpenAI Gym tasks~\citep{openaigym}: \cart{}, \luna{}, and \acro{}.



 
%

The behavior policy in each environment is trained using standard DQN until it reaches $75\%$ of optimal performance, similar to the process adopted in related work (e.g., \citet{fujimoto2018off}).
To assess how exploration and the quality of behavior policy affect learning, we generate five sets of data for each task by injecting different random exploration into the same behavior policy. Specifically, we add $\varepsilon$-greedy exploration  for $\varepsilon = 1$ (fully random), $0.5$, $0.25$, $0.15$, and $0.05$, generating $100K$ transitions each for batch RL training.

All models use the same architecture for a given environment---details (architectures, hyper-parameters, etc.) are described in the appendix of
the extended paper. While training is entirely offline, policy performance is evaluated online using the simulator, at every $1000$ training iterations.
Each measurement is the average return w.r.t.\ $40$ evaluation episodes and $5$ random seeds, and results are averaged over a sliding window of size $10$.

Table~\ref{table:exp1_best_mean_new} shows the average return of \BCRPO{} and the other baselines under the best hyper-parameter configurations in each task setting. Behavior policy performance decreases as $\varepsilon$ increases, as expected, and BC matches that very closely. DQN performs poorly in the batch setting.
Its performance improves as $\varepsilon$ increases from $0.05$ to $0.25$, due to increased state-action coverage, but as $\varepsilon$ goes higher ($0.5$, $1.0$), the state space coverage decreases again since the (near-) random policy is less likely to reach a state far away from the initial state.

Baselines like BCQ, KL-Q and SPIBB follow the behavior policy in some ways, and showing different performance characteristics over the data sets. The underperformance relative to \BCRPO{} is more prominent for very low or very high $\varepsilon$, suggesting deficiency due to overly conservative updates or following the behavior policy too closely, when \BCRPO{} is able to learn.


Since \BCRPO{} exploits the statistics of each $(s, a)$ pair in the batch data, it achieves
good performance in almost all scenarios, outperforming the baselines. 
%
The stable performance and robustness across various scenarios make \BCRPO{} an appealing
algorithm for batch/offline RL in real-world, where it is usually difficult to estimate the amount of exploration required prior to training, given access only to batch data.

\section{Concluding Remarks}
We have presented {\em Batch Residual Policy Optimization} (\BCRPO{}) for learning residual policies in batch RL settings.
Inspired by CPI, we derived learning rules for jointly optimizing both the candidate policy and 
\emph{state-action dependent} confidence mixture of a residual policy to maximize a conservative lower bound on policy performance.
\BCRPO{} is thus more exploitative in areas of state space that are well-covered by the batch data and more conservative in others.
While we have shown successful application of \BCRPO{} to various benchmarks, future work includes deriving finite-sample analysis of \BCRPO{},
and applying \BCRPO{} to more practical batch domains (e.g., robotic manipulation, recommendation systems).


\iftrue
{
    \appendix
    \onecolumn 
    \newpage
    \section{Proofs for Results in Section~\ref{sec:rpo}}
\subsection{Proof of Lemma \ref{lem:main}}
Before going into the derivation of this theorem, we first have the following technical result that studies the distance of the occupation measures that are induced by $\pibehavior$ and $\pi$.
\begin{lemma}\label{lemma:tech}
The following expression holds for any state-next-state pair $(s,s')$:
\[
\left((I-\gamma T_{\pi})^{-1} -(I-\gamma T_{\pibehavior})^{-1}\right)(s'|s)=-\gamma(I-\gamma T_{\pibehavior})^{-1}\Delta T_{\pibehavior,\rho,\lambda}(I-\gamma T_{\pi})^{-1}(s'|s),
\]
where $T_{\pi}(s'|s)$ and $T_{\beta}(s'|s)$ represent the transition probabilities from state $s$ to next-state $s'$ following policy $\pi$ and $\beta$ respectively, and for any state-next-state pair $(s,s')$, $\Delta T_{\pibehavior,\rho,\lambda}(s'|s)=\sum_{a\in A} T(s'|s,a)\pibehavior(a|s)\cdot \mix \cdot\frac{\rho(a|s)-\pibehavior(a|s)}{\pibehavior(a|s)}$.
\end{lemma}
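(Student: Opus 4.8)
The identity is, at bottom, the second resolvent identity applied to the two $\gamma$-discounted transition operators $I-\gamma T_\pi$ and $I-\gamma T_\pibehavior$; the only MDP-specific input is that the one-step state-to-state kernels of the residual policy $\pi$ and the behavior policy $\pibehavior$ differ by exactly $\Delta T_{\pibehavior,\rho,\lambda}$. So I would first record this: writing $T_\pi(s'|s)=\sum_{a}T(s'|s,a)\pi(a|s)$ and similarly for $\pibehavior$, and substituting $\pi(a|s)=(1-\lambda(s,a))\pibehavior(a|s)+\lambda(s,a)\rho(a|s)=\pibehavior(a|s)+\lambda(s,a)(\rho(a|s)-\pibehavior(a|s))$ from the residual-policy definition in Section~\ref{sec:prelim}, one gets
\[
\bigl(T_\pi-T_\pibehavior\bigr)(s'|s)=\sum_{a\in A}T(s'|s,a)\,\lambda(s,a)\bigl(\rho(a|s)-\pibehavior(a|s)\bigr)=\Delta T_{\pibehavior,\rho,\lambda}(s'|s),
\]
using $\pibehavior(a|s)\cdot\lambda(s,a)\cdot\tfrac{\rho(a|s)-\pibehavior(a|s)}{\pibehavior(a|s)}=\lambda(s,a)(\rho(a|s)-\pibehavior(a|s))$, which matches the kernel in the statement; in operator form, $T_\pi=T_\pibehavior+\Delta T_{\pibehavior,\rho,\lambda}$.

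Next I would note that, since $\gamma\in[0,1)$ and $T_\pi,T_\pibehavior$ are stochastic (operator norm $1$), both $I-\gamma T_\pi$ and $I-\gamma T_\pibehavior$ are boundedly invertible via the absolutely convergent Neumann series $(I-\gamma T_\pi)^{-1}=\sum_{t\ge0}\gamma^tT_\pi^t$ --- these are exactly the $\gamma$-discounted occupancy kernels, so all entries $(s'|s)$ below are well-defined and finite. Then the result follows from the elementary identity $M^{-1}-N^{-1}=N^{-1}(N-M)M^{-1}$, valid for any invertible $M,N$, taking $M=I-\gamma T_\pi$ and $N=I-\gamma T_\pibehavior$, since $N-M=\gamma(T_\pi-T_\pibehavior)=\gamma\,\Delta T_{\pibehavior,\rho,\lambda}$ by the first step. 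Concretely, I would present it by multiplying the asserted equality on the left by $(I-\gamma T_\pibehavior)$ and on the right by $(I-\gamma T_\pi)$: the left side collapses to $(I-\gamma T_\pibehavior)-(I-\gamma T_\pi)=\gamma(T_\pi-T_\pibehavior)$ and the right side to $\gamma\,\Delta T_{\pibehavior,\rho,\lambda}$, which coincide by the first step; invertibility of the two multipliers then yields the identity (I would double-check the displayed overall sign against the kernel-composition convention used throughout).

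The one step where genuine care is needed --- and the main obstacle to a fully rigorous write-up --- is the operator bookkeeping when $S$ is uncountable: one must fix a Banach space (e.g., bounded measurable functions, or finite signed measures on $S$) on which $T_\pi$, $T_\pibehavior$, and $\Delta T_{\pibehavior,\rho,\lambda}$ act as bounded linear operators, verify $\|\gamma T_\pi\|<1$ so the resolvents exist and are bounded, and confirm that evaluating the ``$(s'|s)$'' kernel is compatible with the finitely many operator sums and products used. With that in place, the rest is the standard resolvent identity, so the lemma is short.
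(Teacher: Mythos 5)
Your proof is correct and follows essentially the same route as the paper: the paper's own argument is exactly your resolvent identity computed by hand --- it expands $(I-\gamma T_{\pibehavior})^{-1}(I-\gamma T_{\pi})=I-\gamma(I-\gamma T_{\pibehavior})^{-1}\Delta T_{\pibehavior,\rho,\lambda}$ using $T_\pi=T_\pibehavior+\Delta T_{\pibehavior,\rho,\lambda}$ and then right-multiplies by $(I-\gamma T_{\pi})^{-1}$. One point deserves attention, and your parenthetical instinct to double-check the sign was exactly right: in your sandwich check the right-hand side of the \emph{displayed} identity (which carries a leading minus) collapses to $-\gamma\,\Delta T_{\pibehavior,\rho,\lambda}$, not $+\gamma\,\Delta T_{\pibehavior,\rho,\lambda}$ as you wrote, so the two sides do \emph{not} coincide as stated --- the lemma's display has a sign typo. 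The correct statement is $(I-\gamma T_{\pi})^{-1}-(I-\gamma T_{\pibehavior})^{-1}=+\gamma(I-\gamma T_{\pibehavior})^{-1}\Delta T_{\pibehavior,\rho,\lambda}(I-\gamma T_{\pi})^{-1}$, which is what the second resolvent identity $M^{-1}-N^{-1}=N^{-1}(N-M)M^{-1}$ gives, what the paper's own proof actually concludes (its final line states the identity with the left side written as $(I-\gamma T_{\pibehavior})^{-1}-(I-\gamma T_{\pi})^{-1}$), and what the downstream derivation of Lemma~\ref{lem:main} requires. So treat your argument as a proof of the corrected sign rather than of the display as printed.
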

\begin{proof}
Consider the following chain of equalities from matrix manipulations:
\[
\begin{split}
(I-\gamma T_{\pibehavior})^{-1}\cdot(I-\gamma T_{\pi})=&(I-\gamma T_{\pibehavior})^{-1}\cdot\left(I-\gamma  \left\{\sum_{a\in A} T(s'|s,a)\left(\pibehavior(a|s)+ \mix (\rho(a|s)-\pibehavior(a|s))\right)\right\}_{s,s'}\right)\\
=&\left(I-\gamma(I-\gamma T_{\pibehavior})^{-1}\left\{ \sum_{a\in A} T(s'|s,a)\mix (\rho(a|s)-\pibehavior(a|s))\right\}_{s,s'\in S}\right).
\end{split}
\]
By multiplying the matrix $(I-\gamma T_{\pi})^{-1}$ on both sides of the above expression, it implies that
\[
\begin{split}
(I-\gamma T_{\pibehavior})^{-1}-(I-\gamma T_{\pi})^{-1}=&-\left(\gamma(I-\gamma T_{\pibehavior})^{-1}\left\{ \sum_{a\in A} T(s'|s,a)\mix(\rho(a|s)-\pibehavior(a|s))\right\}_{s,s'\in S}(I-\gamma T_{\pi})^{-1}\right).
\end{split}
\]
Using the definition of $\Delta T_{\pibehavior,\rho,\lambda}$ completes the proof of this lemma.
\end{proof}

Using the above result, for any initial state $s\in S$, the value functions that are induced by $\pibehavior$ and $\pi$ have the following relationship:
\begin{equation}\label{eq:intermediate}
\begin{split}
V_{\pi}(s)-V_{\pibehavior}(s)=&\delta_{s_0=s}^\top(I-\gamma T_{\pi})^{-1} R_{\pi}-\delta_{s_0=s}^\top(I-\gamma T_{\pibehavior})^{-1} R_{\pibehavior}\\
=&\delta_{s_0=s}^\top\left((I-\gamma T_{\pi})^{-1} -(I-\gamma T_{\pibehavior})^{-1}\right) R_{\pi} + \delta_{s_0=s}^\top(I-\gamma T_{\pibehavior})^{-1}\left( R_\pi- R_{\pibehavior}\right)\\
= &\gamma\cdot\mathbb E_{T,\pibehavior}\left[\sum_{t=0}^\infty\gamma^t\mix[(s_t, a_t)]\cdot \frac{\rho(a_t|s_t)-\pibehavior(a_t|s_t)}{\pibehavior(a_t|s_t)}\cdot V_{\pi}(s_{t+1})\mid s_0=s\right] \\
&\quad+\mathbb E_{T,\pibehavior}\left[\sum_{t=0}^\infty\gamma^t \mix[(s_t, a_t)] \cdot \frac{\rho(a_t|s_t)-\pibehavior(a_t|s_t)}{\pibehavior(a_t|s_t)}\cdot R(s_t,a_t)\mid s_0=s\right]\\
= & \mathbb E_{T,\pibehavior}\left[\sum_{t=0}^\infty\gamma^t \mix[(s_t, a_t)] \cdot \frac{\rho(a_t|s_t)-\pibehavior(a_t|s_t)}{\pibehavior(a_t|s_t)}\cdot Q_\pi(s_t,a_t)\mid s_0=s\right]\\
=&\mathbb E_{T,\pibehavior}\left[\sum_{t=0}^\infty\gamma^t \mix[(s_t, a_t)] \cdot \frac{\rho(a_t|s_t)-\pibehavior(a_t|s_t)}{\pibehavior(a_t|s_t)}\cdot A_\pi(s_t,a_t)\mid s_0=s\right].
\end{split}
\end{equation}
The second equality follows from the fact that 
$
Q_\pi(s,a) = R(s,a) + \gamma\sum_{s'\in S}T(s'|s,a)V_\pi(s').
$
The third equality follows from the result in Lemma \ref{lemma:tech} and the fact that for any state $s\in S$, $(I-\gamma T_{\pi})^{-1}R_\pi(s)=V_\pi(s)$. The last equality is based on the fact of the confidence constraint that 
\[
\mathbb E_{T,\pibehavior}\left[\sum_{t=0}^\infty \gamma^t V_\pi(s_t)\cdot\sum_{a}\mix[(s_t, a)]\cdot(\rho(a|s_t)- \beta(a|s_t))\mid s_0=s\right] = 0,\,\,\forall s\in S.
\]

\subsection{Proof of Theorem \ref{thm:main}}
Denote by $V_\pi$ and $V_\pibehavior$ the vectors of value functions $V_\pi(s)$ and $V_\pibehavior(s)$ at every state $s\in S$ respectively. Re-writing the result in (\ref{eq:intermediate}) in matrix form, it can be expressed as
\[
V_{\pi}-V_{\pibehavior}=(I-\gamma T_{\pibehavior})^{-1}\left(\Delta R_{\pibehavior,\rho,\lambda}+\gamma\Delta T_{\pibehavior,\rho,\lambda}V_\pibehavior+\gamma\Delta T_{\pibehavior,\rho,\lambda}(V_\pi-V_\pibehavior)\right),
\]
where for any state $s\in S$, $\Delta R_{\pibehavior,\rho,\lambda}(s)=\sum_{a}\pibehavior(a|s)\cdot \mix \cdot \frac{\rho(a|s)-\pibehavior(a|s)}{\pibehavior(a|s)}\cdot R(s,a)$, and $\Delta T_{\pibehavior,\rho,\lambda}(s'|s)=\sum_{a\in A} T(s'|s,a)\pibehavior(a|s)\cdot \mix \cdot\frac{\rho(a|s)-\pibehavior(a|s)}{\pibehavior(a|s)}$. This expression implies that
\[
(I - (I-\gamma T_{\pibehavior})^{-1}\gamma\Delta T_{\pibehavior,\rho,\lambda})(V_{\pi}-V_{\pibehavior})=(I-\gamma T_{\pibehavior})^{-1}(\Delta R_{\pibehavior,\rho,\lambda}+\gamma\Delta T_{\pibehavior,\rho,\lambda}V_\pibehavior),
\]
which further implies that
\[
V_{\pi}-V_{\pibehavior}=(I - (I-\gamma T_{\pibehavior})^{-1}\gamma\Delta T_{\pibehavior,\rho,\lambda})^{-1}(I-\gamma T_{\pibehavior})^{-1}\left(\Delta R_{\pibehavior,\rho,\lambda}+\gamma\Delta T_{\pibehavior,\rho,\lambda}V_\pibehavior\right).
\]
Here based on the definition of $\Delta T_{\pibehavior,\rho,\lambda}$ and the confidence constraint, one can show that $(T_{\pibehavior}+\Delta T_{\pibehavior,\rho,\lambda})$ is a stochastic matrix (all the elements are non-negative, and $\sum_{s'\in S}(T_{\pibehavior}+\Delta T_{\pibehavior,\rho,\lambda})(s'|s)=1$, $\forall s\in S$). Therefore the matrix $(I - (I-\gamma T_{\pibehavior})^{-1}\gamma\Delta T_{\pibehavior,\rho,\lambda})$ is invertible. 

Using the matrix inversion lemma, one has the following equality:
\[
(I - (I-\gamma T_{\pibehavior})^{-1}\cdot\gamma\cdot\Delta T_{\pibehavior,\rho,\lambda})^{-1}=(I-\gamma (T_{\pibehavior}+\Delta T_{\pibehavior,\rho,\lambda}))^{-1}(I-\gamma T_{\pibehavior}).
\]
Therefore the difference of value function $V_{\pi}-V_{\pibehavior}$ can further be expressed as
\[
V_{\pi}-V_{\pibehavior}=(I-\gamma T_{\pibehavior}-\gamma\Delta T_{\pibehavior,\rho,\lambda})^{-1}\left(\Delta R_{\pibehavior,\rho,\lambda}+\gamma\Delta T_{\pibehavior,\rho,\lambda}V_\pibehavior\right).
\]
In other words, at any state $s\in S$, the corresponding value function $V_\pi(s)$ is given by the following expression:
\begin{equation}
\begin{split}
V_\pi(s)-V_\pibehavior(s)=&\mathbb E\left[\sum_{t=0}^\infty\gamma^t\left(\Delta R_{\pibehavior,\rho,\lambda}+\gamma\Delta T_{\pibehavior,\rho,\lambda}V_\pibehavior\right)(s_t)\mid T'_{\pibehavior,\rho},s_0=s\right]\\
=&\mathbb E\left[\sum_{t=0}^\infty\gamma^t\Delta Q_{\pibehavior,\rho,\lambda}(s_t,a_t)\mid T'_{\pibehavior,\rho},s_0=s\right]\\
=&\mathbb E\left[\sum_{t=0}^\infty\gamma^t\Delta A_{\pibehavior,\rho,\lambda}(s_t,a_t)\mid T'_{\pibehavior,\rho},s_0=s\right],
\end{split}
\end{equation}
where the transition probability $T'_{\pibehavior,\rho}(s'|s)=(T_{\pibehavior}+\Delta T_{\pibehavior,\rho,\lambda})(s'|s)$ is given by $\sum_{a\in A}\pibehavior(a|s)\cdot T(s'|s,a)\cdot\left(1+ \mix \cdot\frac{\rho(a|s)-\pibehavior(a|s)}{\pibehavior(a|s)}\right)$ at state-next-state pair $(s,s')$. By noticing that $T'_{\pibehavior,\rho}(s'|s)$ is indeed $T_\pi(s'|s)$ (the transition probability that is induced by residual policy $\pi$), the proof of Theorem \ref{thm:main} is completed.

    \newpage
    \section{Proofs for Results in Section~\ref{sec:bcrpo}}
\subsection{Proof of Theorem \ref{thm:cpi_2}}
Define the state-action discounted stationary distribution w.r.t. an arbitrary policy $\pi$ as 
$\visitpi(s,a)=(1-\gamma)\sum_{t=0}^\infty\gamma^t \mathbb P(s_t=s,a_t=a|s_0\sim  P_0,\pi)$ and its state-only counterpart as $\visitpi(s)=\sum_{a\in A}\visitpi(s,a)\pi(a|s)$.
Immediately one can write the difference of return (objective function of this problem) with the following chain of equalities/inequalities:
\begin{align*}
\E\left[\sum_{t=0}^\infty \gamma^t\Delta A_{\pibehavior,\rho,\lambda}(s_t)\mid T, \pi, s_0\sim P_0 \right]
&=\frac{1}{1-\gamma}\sum_{s\in S}\visitpi(s)\Delta A_{\pibehavior,\rho,\lambda}(s)\\
&=\frac{1}{1-\gamma}\sum_{s\in S}\visitbehavior(s)\Delta A_{\pibehavior,\rho,\lambda}(s)+ (\visitpi(s)-\visitbehavior(s))\Delta A_{\pibehavior,\rho,\lambda}(s).
\end{align*}

Recall that $\Delta T_{\pibehavior,\rho,\lambda}(s'|s)=\sum_{a\in A} T(s'|s,a)\pibehavior(a|s)\cdot \mix \cdot\frac{\rho(a|s)-\pibehavior(a|s)}{\pibehavior(a|s)}$. At any state $s\in S$, the difference of stationary distribution $\visitpi(s)-\visitbehavior(s)$ can be further expressed as
\[
\begin{split}
 (\visitpi-\visitbehavior&)(s)
 = P_0^\top\left((I-\gamma T_{\pi})^{-1}-(I-\gamma T_{\pibehavior})^{-1}\right)(s)\\
 =& P_0^\top\big((I-\gamma T_{\pibehavior})^{-1}+(I-\gamma T_{\pibehavior})^{-1}\gamma\Delta T_{\pibehavior,\rho,\lambda}(I-(I-\gamma T_{\pibehavior})^{-1}\gamma\Delta T_{\pibehavior,\rho,\lambda})^{-1}\cdot\\
&\qquad\qquad(I-\gamma T_{\pibehavior})^{-1}-(I-\gamma T_{\pibehavior})^{-1}\big)(s)\\
 =& P_0^\top\left((I-\gamma T_{\pibehavior})^{-1}\gamma\Delta T_{\pibehavior,\rho,\lambda}(I-(I-\gamma T_{\pibehavior})^{-1}\gamma\Delta T_{\pibehavior,\rho,\lambda})^{-1}(I-\gamma T_{\pibehavior})^{-1}\right)(s).
 \end{split}
\]
 
Let $D_{\pibehavior}=\left\{(1-\gamma)\E[\sum_{t=0}^\infty\gamma^t\mathbb P (s_t=s'|s_0=s,\pibehavior)]\right\}_{s,s'\in S}$ be the occupation measure matrix induced by $\pibehavior$. Combining the above arguments one has
\begin{align}
& \left|\langle\visitpi-\visitbehavior,\Delta A_{\pibehavior,\rho,\lambda}\rangle\right|\label{eq:abs_diff}\\
= &\left|\langle P_0^\top(I-\gamma T_{\pibehavior})^{-1}\gamma\Delta T_{\pibehavior,\rho,\lambda}(I-(I-\gamma T_{\pibehavior})^{-1}\gamma\Delta T_{\pibehavior,\rho,\lambda})^{-1}(I-\gamma T_{\pibehavior})^{-1},\Delta A_{\pibehavior,\rho,\lambda}\rangle\right|\nonumber\\
= &\frac{1}{1-\gamma}\left|\langle P_0^\top D_{\pibehavior}\gamma\Delta T_{\pibehavior,\rho,\lambda}((1-\gamma)I-D_\pibehavior\gamma\Delta T_{\pibehavior,\rho,\lambda})^{-1},D_{\pibehavior}\Delta A_{\pibehavior,\rho,\lambda}\rangle\right|\nonumber\\
= &\frac{\gamma}{1-\gamma}\left|\langle P_0^\top D_{\pibehavior}\Delta T_{\pibehavior,\rho,\lambda},\left( I -\gamma \left(I+D_\pibehavior\Delta T_{\pibehavior,\rho,\lambda}\right)\right)^{-1}D_{\pibehavior}\Delta A_{\pibehavior,\rho,\lambda}\rangle\right|.\nonumber
\end{align}
Now $I+D_\pibehavior\Delta T_{\pibehavior,\rho,\lambda}$ is a stochastic matrix, which is because for any state $s\in S$,
\begin{align*}
(D_\pibehavior\Delta T_{\pibehavior,\rho,\lambda}e)(s) &= (1-\gamma) \E \left[\sum_{t=0}^\infty \gamma^t\sum_{s'}\Delta T_{\pibehavior,\rho,\lambda}(s'|s)\mid T, \beta, s_0=s \right]\\
&=(1-\gamma)\E\left[\sum_{t=0}^\infty \gamma^t\sum_{a\in A} \mix \cdot(\rho(a|s)-\pibehavior(a|s))\mid T, \beta, s_0=s \right]\\
&=0.
\end{align*}
Using this property one can upper bound the magnitude of each element of matrix $\left( I -\gamma \left(I+D_\pibehavior\Delta T_{\pibehavior,\rho,\lambda}\right)\right)^{-1}$ by $\frac{1}{1-\gamma}$. Therefore, using Holder inequality one can further upper bound the expression in (\ref{eq:abs_diff}) as follows:
\begin{align*}
\left|\langle\visitpi-\visitbehavior,\Delta A_{\pibehavior,\rho,\lambda}\rangle\right|
&\leq \frac{\gamma}{(1-\gamma)^2}\left\| P_0^\top D_{\pibehavior}\Delta T_{\pibehavior,\rho,\lambda}\right\|_1\cdot\left\|D_{\pibehavior}\Delta A_{\pibehavior,\rho,\lambda}\right\|_\infty\\
&\leq \frac{\gamma}{(1-\gamma)^2}\sum_{s\in S,a\in A}\visitbehavior(s)\pibehavior(a|s)\cdot \mix \cdot\frac{|\rho(a|s)-\pibehavior(a|s)|}{\pibehavior(a|s)}\cdot \max_{s_0\in S}\sum_{s\in S}\visitbehavior(s|s_0)\left|\Delta A_{\pibehavior,\rho,\lambda}\right|.
 \end{align*}
Plugging in the definition of the discounted occupation measure w.r.t. $\pibehavior$ into the above expression, the proof is this theorem is completed. 
    
    \newpage
    \section{Practical Implementation Details of BRPO}
In this section, we discuss several practical techniques to further boost training stability and effectiveness of \BCRPO{}.

\paragraph{Improving CPI with Optimal Advantage}
The derivation in Section \ref{sec:bcrpo} and Section \ref{sec:algorithm} shows that optimizing  $\mathcal{L}_{\pibehavior,\rho,\lambda}$ finds a residual policy that performs no-worse than the behavior policy (modulo any Lagrangian approximation error).
While we argue that this optimization is less conservative than existing methods (like TRPO) due to the state-action-dependent learned confidence, it might not be aggressive enough in leveraging the function approximation to generalize to unseen state and action.
One major reason is that by design, $\mathcal{L}_{\pibehavior,\rho,\lambda}$ only uses the long-term value of $\pibehavior$ (in the form of $A_\pibehavior$), in order to circumvent the issue of bad generalization to unseen state-action pair. This also makes policy improvement local to $\pibehavior$. This is a fundamental challenge of batch RL, but can be relaxed depending on the domain.
As a remedy to this issue, by a convex ensemble of the results from Lemma \ref{lem:main} and Theorem \ref{thm:main} (with any combination weight $\mu\in[0,1]$), notice that the difference-return also satisfies
\[
J_\pi - J_\beta \ge \frac{1}{1-\gamma} \left( \widetilde{L}^{\prime}_{\mu,\pibehavior,\rho,\lambda} - \frac{\gamma(1-\mu)}{1-\gamma} L^{\prime\prime}_{\pibehavior,\rho,\lambda}\max_{s_0\in S}\, L^{\prime\prime\prime}_{\pibehavior,\rho,\lambda}(s_0)
    \right),
\]
where 
\[
\widetilde{L}^{\prime}_{\mu,\pibehavior,\rho,\lambda}:= \E_{(s,a)\sim \visitbehavior}\left[
    \mix \cdot\frac{\rho(a|s)-\pibehavior(a|s)}{\pibehavior(a|s)} \cdot
    W(s,a)
    \right]
    \]
    with a \emph{weighted advantage} function $W:= (1-\mu) \adv[\pibehavior] + \mu \adv[\pi]$.
Therefore, without loss of generality one can replace $A_\pibehavior$ in
$\mathcal{L}_{\pibehavior,\rho,\lambda}$ with the weighted advantage function. Furthermore, to avoid estimating $A_\pi$ at each policy update and assuming that CPI eventually finds $\pi\rightarrow\pi^*$, one may directly estimate the optimal weighted advantage function 
$$W_{\pi^*}(s,a)= Q_{\mu,\beta,\pi^*}(s,a)-V_{\mu,\beta,\pi^*}(s),$$ in which the value function $Q_{\mu,\beta,\pi^*}$ is a Bellman fixed-point of  $Q(s,a)=R(s,a)+\gamma\sum_{s'}T(s'|s,a) V(s')$, with
\[
V(s) = (1-\mu)\mathbb E_{a\sim\pibehavior}[Q(s,a)]+\mu \max_{a}Q(s,a).
\]
This approach of combining the optimal Bellman operator with the on-policy counterpart belongs to the general class of hybrid on/off-policy RL algorithms \citep{o2016combining}.

Therefore,
we learn an advantage function $W$ that is a weighted combination of $A_\pibehavior$ and $A_{\pi^*}$.
Using the batch data $B$, the expected advantage $A_\pibehavior$ can be learned with any critic-learning technique, such as SARSA~\citep{sutton2018reinforcement}.
We can learn $A_{\pi^*}$ by DQN~\citep{mnih2013playing} or other Q-learning algorithm.
We provide pseudo-code of our \BCRPO{} algorithm in Algorithm~\ref{alg:train}.
    
    \newpage
    \section{Experimental Details} \label{appendix:exp_details}
    This section describes more details about our experimental setup to evaluate the algorithms.

\subsection{Behavior policy}
We train the behavior policy using DQN, using architecture and hyper-parameters specified in Section~\ref{sec:hyper-parameter}. The behavior policy was trained for each task until the performance reaches around 75\% of the optimal performance similar to ~\cite{fujimoto2018off} and \cite{kumar2019stabilizing}. 
Specifically, we trained the behavior policy for 100,000 steps for \luna{}, and 50,000 steps for \cart{} and \acro{}. We used two-layers MLP with FC($32$)-FC($16$). The replay buffer size is $500,000$ and batch size is $64$.
The performance of the behavior policies are given in Table~\ref{table:exp1_best_mean_new}.


\subsection{Hyperparameters}\label{sec:hyper-parameter}
For fair comparison, we generally used the same set of hyper-parameters and
architecture across all methods and experiments, which are defined in Table~\ref{table:hyper_params} and Table~\ref{table:hyper_params_BRPO}. Similar to the behavior policy, we used two-layers MLP with FC(32)-FC(16) for all the critic agents and Behavioral cloning agent's policy.
The final hyperparameters are found using grid search, with candidate set specified in Table~\ref{table:hyper_params} and Table~\ref{table:hyper_params_BRPO}.

\begin{table}[ht]
\centering
\begin{tabular}{|l|c|c|}
\hline
\textbf{Hyperparameters for BC, BCQ, SARSA, DQN, and KL-Q} & \textbf{Sweep range} & \textbf{Final value}\\ [0.5ex]
\hline
\hline
Soft target update rate ($\tau$) & - & 0.5 \\
\hline
Soft target update period & 150, 500, 1500 & 500 \\
\hline
Discount factor & - & 0.99\\
\hline
Mini-batch size & - & 64 \\
\hline
Q-function learning rates & 0.0003, 0.001, 0.002 & 0.001\\
\hline
Neural network optimizer & - & Adam \\
\hline
[BCQ] Behavior policy threshold ($\tau$ in \cite{fujimoto2019benchmarking}) & 0.1, 0.2, 0.3, 0.4 & 0.3 \\
\hline
[SPIBB] Bootstrapping set threshold & 0.1, 0.2, 0.3, 0.4 & 0.2 \\
\hline
[KL-Q] KL-regularization weight & 0.01, 0.03, 0.1, 0.3 & 0.1 \\
\hline
\end{tabular}
\caption{The range of hyperparameters sweeped over and the final hyperparameters used for the baselines (BC, BCQ, SARSA, DQN, and KL-Q).}
\label{table:hyper_params}
\end{table}

\begin{table}[ht]
\centering
\begin{tabular}{|l|c|c|}
\hline
\textbf{Hyperparameters for BRPO and BRPO-C} & \textbf{Sweep range} & \textbf{Final value}\\ [0.5ex]
\hline
\hline
Soft target update rate ($\tau$) & - & 0.5 \\
\hline
Soft target update period & 150, 500, 1500 & 500 \\
\hline
Discount factor & - & 0.99\\
\hline
Mini-batch size & - & 64 \\
\hline
Q-function learning rates & 0.0003, 0.001, 0.002 & 0.001\\
\hline
Neural network optimizer & - & Adam \\
\hline
[BRPO, BRPO-C] Mixing $\mu$ (for critic training) & 0.1, 0.5, 0.9 & 0.9\\
\hline
[BRPO] Confidence $\lambda_{\phi}(s, a)$ learning rates & 0.0002, 0.0001 & 0.0001 \\
\hline
[BRPO-C] Constant $\lambda$ (for constant residual policy) & 0.25, 0.33, 0.5, 0.66, 0.75 & 0.5\\
\hline
\end{tabular}
\caption{The range of hyperparameters sweeped over and the final hyperparameters used for the proposed methods (BRPO and BRPO-C).}
\label{table:hyper_params_BRPO}
\end{table}

    \section{Additional Results} \label{appendix:exp_results}
    Here are the learning curves for each environment and behavior policy over the course of batch training.

\label{additional_plots}
\begin{figure*}[!ht]
    \vspace*{-3pt}
    \centering
    \includegraphics[draft=false,width=0.99\linewidth]{./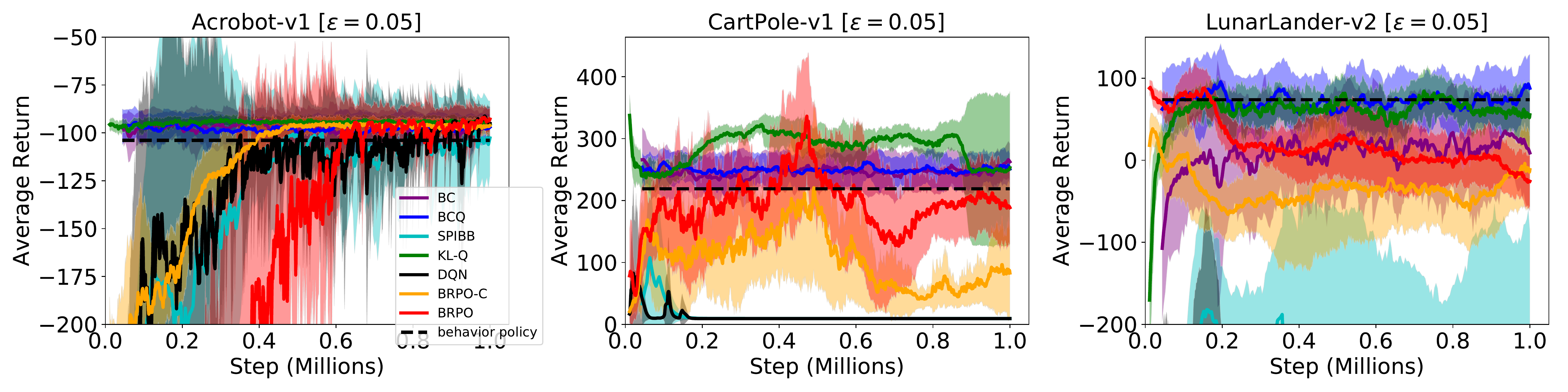}\\
    \includegraphics[draft=false,width=0.99\linewidth]{./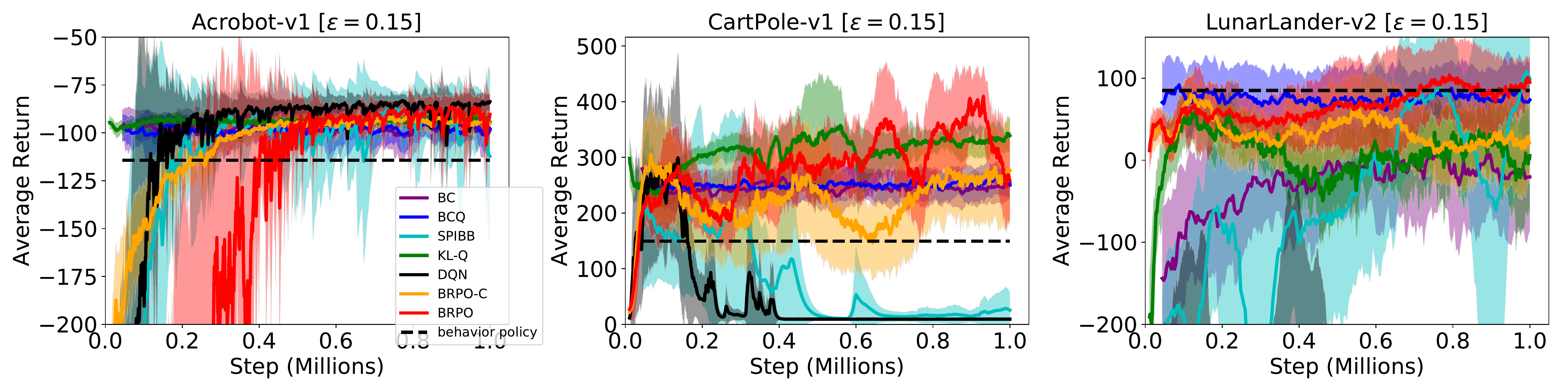}\\
    \includegraphics[draft=false,width=0.99\linewidth]{./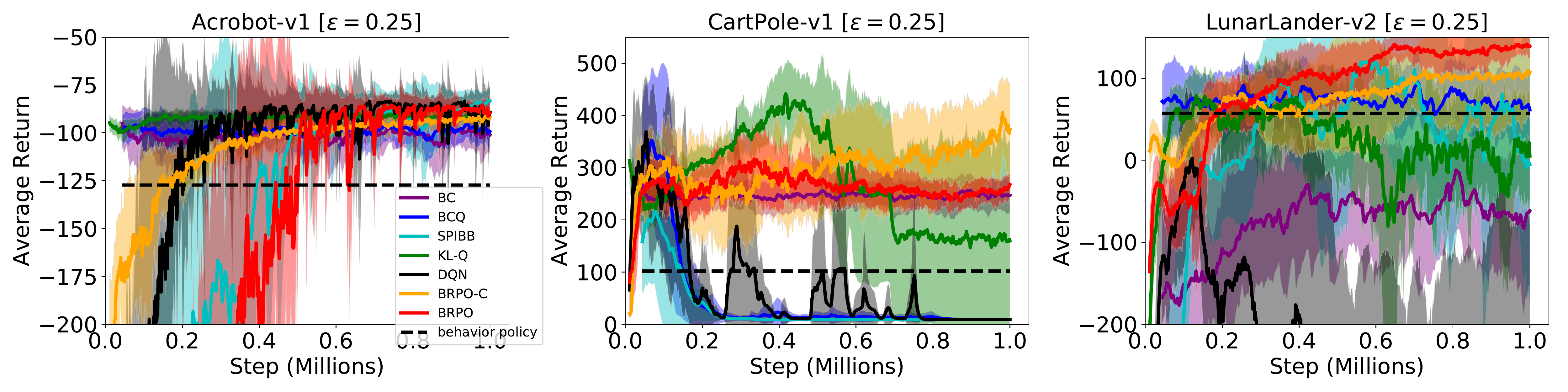}\\
    \includegraphics[draft=false,width=0.99\linewidth]{./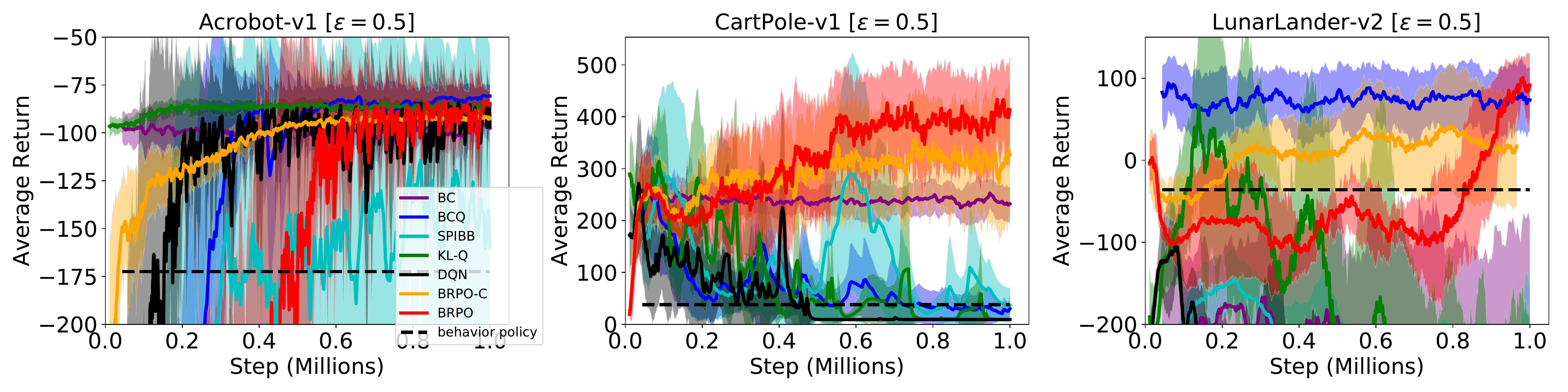}\\
    \includegraphics[draft=false,width=0.99\linewidth]{./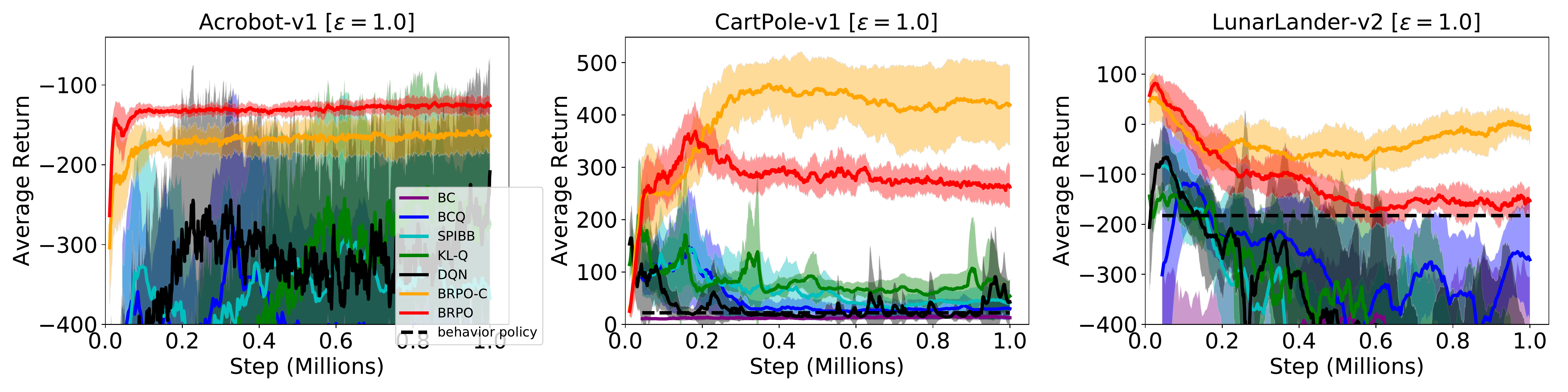}
    \vspace{-8pt}
    \caption{
        The mean $\pm$ standard error (shadowed area) of average return for each environment and behavior policy $\varepsilon$-exploration.
    }
    \label{fig:exp1_best_mean}
    \vspace{-7pt}
\end{figure*}

}
\fi

\end{document}